\documentclass[11pt]{article}

\usepackage[T1]{fontenc}
\usepackage[utf8]{inputenc}
\usepackage[margin=1in]{geometry}
\usepackage{amsmath,amssymb,amsthm,mathtools}
\usepackage{microtype}
\usepackage{enumitem}
\usepackage{xcolor}
\usepackage{hyperref}

\hypersetup{colorlinks=true,linkcolor=blue!55!black,citecolor=blue!55!black,urlcolor=blue!55!black}

\newtheorem{theorem}{Theorem}[section]
\newtheorem{lemma}[theorem]{Lemma}
\newtheorem{corollary}[theorem]{Corollary}
\newtheorem{proposition}[theorem]{Proposition}
\theoremstyle{definition}
\newtheorem{definition}[theorem]{Definition}
\newtheorem{problem}[theorem]{Problem}

\newtheorem{example}[theorem]{Example}
\theoremstyle{remark}

\newcommand{\bits}{\{0,1\}}

\newcommand{\VC}{\operatorname{VCdim}}
\newcommand{\Ldim}{\operatorname{Ldim}}
\newcommand{\Cdim}{\operatorname{Cdim}}

\newcommand{\supp}{\operatorname{supp}}

\newcommand{\df}{\mathrm{df}}
\newcommand{\Int}{\operatorname{Int}}
\newcommand{\Boring}{\operatorname{Bor}}
\newcommand{\Nbr}{\mathcal N}

\usepackage{tikz}
\usetikzlibrary{arrows.meta,positioning,calc,fit}

\title{Expressivity of Contradiction Graphs}
\author{Jesse Campbell, Daniel Ibaibarriaga, and Lev Reyzin\\
Department of Mathematics, Statistics, \& Computer Science\\
University of Illinois Chicago\\
\small{\texttt{\{jcamp51,dibai3,lreyzin\}@uic.edu}}}
\date{}

\begin{document}
\maketitle

\begin{samepage}
\begin{abstract}
We study the contradiction graphs associated with a binary concept class. For a class $H\subseteq\bits^X$, the order-$m$ contradiction graph $G_m(H)$ has as vertices the $H$-realizable labeled sequences of length $m$, with two vertices adjacent when the two sequences assign opposite labels to some common domain point. First, we identify a graph-theoretic property that determines the threshold predicate $\VC(H)\ge m$. Consequently, the sequence $(G_m(H))_{m\ge1}$ determines the exact VC dimension and, in particular, distinguishes finite from infinite VC dimension, answering a question posed by Alon~et~al.~\cite{AlonMoranScheflerYehudayoff2024}.

We then generalize this result by proving that the sequence of contradiction graphs determines, up to signed relabeling, the realizable datasets of any fixed length. Thus, any learning-theoretic property determined by the realizable datasets of a fixed length and invariant under signed relabeling can be recovered from the contradiction graph sequence. As an application, we explicitly provide a characterization of Littlestone dimension.
\end{abstract}

\section{Introduction}\label{sec:intro}

A binary concept class $H\subseteq\bits^X$ can be studied through the finite labeled samples it realizes.  The \emph{contradiction graph} packages this information into a sequence of graphs: the vertices of $G_m(H)$ are the $H$-realizable labeled sequences of size $m$, and two vertices are adjacent when the corresponding samples assign opposite labels to some common domain point.  This construction was introduced by Alon, Moran, Schefler, and Yehudayoff in their graph-theoretic characterization of private learnability~\cite{AlonMoranScheflerYehudayoff2024}.  In their work, clique-type parameters of contradiction graphs characterize pure and approximate differentially private (DP) learnability.

This paper extends the expressivity of contradiction graphs beyond DP learning. We ask, once a concept class is replaced by its contradiction graphs, how much information about the original learning problem remains?


Our first result concerns VC dimension. The VC dimension is the fundamental combinatorial parameter of distribution-free binary classification. Introduced by Vapnik and Chervonenkis in their work on uniform convergence~\cite{VapnikChervonenkis1971}, it characterizes classical PAC learnability~\cite{Valiant1984,BlumerEhrenfeuchtHausslerWarmuth1989,EhrenfeuchtHausslerKearnsValiant1989}.


\begin{problem}[Alon et al.~\cite{AlonMoranScheflerYehudayoff2024}]\label{prob:amsy}
Do there exist binary classes $H_0,H_1$ such that
\[
\VC(H_0)<\infty,\qquad
\VC(H_1)=\infty,
\qquad\text{and}\qquad
(G_m(H_0))_{m\ge1}\cong (G_m(H_1))_{m\ge1}?
\]
\end{problem}

We prove that no such pair exists.  In fact, we prove the stronger statement that the single graph $G_m(H)$ determines whether $\VC(H)\ge m$. In particular, we observe that the existence of a clique of size $2^m$ with a special combinatorial property in the contradiction graph $G_m(H)$, which we call a \textit{cube-trace clique}, determines the predicate $\VC(H)\ge m$.

\end{samepage}

Our second result addresses a more general problem about the expressivity of contradiction graphs. 

\begin{problem}[Alon et al.~\cite{AlonMoranScheflerYehudayoff2024}]\label{prob:amsy2}
Which learning-theoretic properties are definable
by the contradiction graph?
\end{problem}

We prove the somewhat surprising result that \textit{all} properties defined in terms of realizable samples of a fixed length and invariant under signed relabeling are definable. In particular, we let two concept classes be learning-equivalent ($\equiv_{\text{le}}$) if, for every $m\geq1$, their realizable length-$m$ samples agree after a permutation of domain points and a possible flip of the labels over each point. The contradiction-graph sequence determines this equivalence class:
\[
(G_m(H))_{m\ge1}\cong (G_m(K))_{m\ge1}
\qquad\Longleftrightarrow\qquad
H\equiv_{\text{le}} K.
\]
Indeed, every learning-theoretic property determined by samples of a fixed size and invariant under signed relabeling is preserved by learning-equivalence. This includes properties such as VC dimension, Littlestone dimension, and star dimension, among others.

The proof that contradiction graphs characterize VC dimension relies on the properties of vertices of the form $((x,b),\ldots,(x,b))$, which correspond precisely to vertices minimal by neighborhood inclusion. However, whenever the domain of the hypothesis class is infinite, the need for repeated examples may be bypassed by padding the dataset with extraneous examples.

In fact, this behavior holds for all properties preserved by learning-equivalence. We introduce the \textit{duplicate-free} contradiction graph as the contradiction graph with all samples that contain any repetition removed. We show that, in addition to the sequence of contradiction graphs unconditionally determining learning-equivalence, the duplicate-free graph also determines learning-equivalence whenever the domain of the hypothesis class is infinite.

By contrast, we give two non-learning-equivalent concept classes on a finite domain whose duplicate-free contradiction graphs are isomorphic at every level. Thus, we obtain a complete classification of learning-equivalence with and without repeated examples allowed.

\begin{figure}[ht]
    \centering
    \renewcommand{\arraystretch}{1.5}
    \begin{tabular}{c|c|c}
        &
        \textbf{Infinite domain}
        &
        \textbf{Finite domain}
        \\ \hline

        \textbf{Contradiction graph}
        &
        $\checkmark$ (Theorem \ref{thm:reconstruction-main})
        &
        $\checkmark$ (Theorem \ref{thm:reconstruction-main})
        \\

        \textbf{Duplicate-free graph}
        &
        $\checkmark$ (Theorem \ref{thm:reconstruction-df})
        &
        $\boldsymbol{\times}$ (Example \ref{ex:df-not-reconstructive})
        \\

    \end{tabular}
    \caption{
            Comparison between the contradiction graph and its duplicate-free
    version. A $\checkmark$ indicates that the graph characterizes
    learning-equivalence, while a $\boldsymbol{\times}$ indicates that it does not.}
    \label{fig:graph-information}
\end{figure}

To illustrate how these new techniques can be used to define, in purely graph-theoretic terms, the combinatorial dimensions we are interested in, we show how to define Littlestone dimension by means of the contradiction graph. In the last section, we include a brief discussion of the limits of the expressive power of the contradiction graph. Namely, we show that it is impossible to recover the entire hypothesis class from the sequence of contradiction graphs.


We note that the learning-equivalence theorem implies the existence of a graph-theoretic characterization of VC dimension. Nonetheless, we retain our explicit characterization for its conceptual value. Not only does it explicitly answer the original question of Alon et al.~\cite{AlonMoranScheflerYehudayoff2024}, but it does so in the combinatorial terms intended in their paper, using the language of cliques. 

We also observe that cliques in the contradiction graph are closely related to subcube partitions, DNF tautologies, and biclique covers of the complete graph. We demonstrate these connections in Appendix \ref{apx:misc}. In particular, we can immediately recover a key lemma of Alon et al.~\cite{AlonMoranScheflerYehudayoff2024} by viewing a clique from the perspective of its natural biclique cover. 



\section{Related work}\label{sec:related}

VC dimension was introduced in the study of uniform convergence~\cite{VapnikChervonenkis1971}.  The Sauer--Shelah lemma~\cite{Sauer1972,Shelah1972} gives the basic growth bound for classes of bounded VC dimension, and the work of Valiant~\cite{Valiant1984}, Blumer et al.~\cite{BlumerEhrenfeuchtHausslerWarmuth1989}, and Ehrenfeucht et al.~\cite{EhrenfeuchtHausslerKearnsValiant1989} established the central role of VC dimension in PAC learnability.  Graph-theoretic approaches to shattering have appeared in other forms.  For example, Kozma and Moran related shattering and strong shattering to orientations and connectivity properties of graphs~\cite{KozmaMoran2013}. 

The original motivation for contradiction graphs comes from differential privacy, introduced by Dwork et al.~\cite{DworkMcSherryNissimSmith2006,DworkMcSherryNissimSmith2017}; see also~\cite{DworkKenthapadiMcSherryMironovNaor2006,DworkRoth2014}.  Private PAC learning was initiated by Kasiviswanathan et al.~\cite{KasiviswanathanLeeNissimRaskhodnikovaSmith2011}, and subsequent work studied sample complexity, data release, proper learning, and pure versus approximate privacy~\cite{ChaudhuriHsu2011,BlumLigettRoth2013,BeimelBrennerKasiviswanathanNissim2014,FeldmanXiao2015,BeimelNissimStemmer2016,BeimelNissimStemmer2019}.  A related line connects private learning with online learning and Littlestone dimension.  Littlestone introduced the dimension now bearing his name in the online mistake-bound model~\cite{Littlestone1988}.  Alon et al.~\cite{AlonBunLivniMalliarisMoran2022} proved a qualitative equivalence between private and online learnability, while later work studied proper learning, approximate privacy, and computational separations~\cite{GhaziGolowichKumarManurangsi2021,BunCohenDesai2024}.

The work most directly related to this paper is that of Alon et al.~\cite{AlonMoranScheflerYehudayoff2024}, who introduced the contradiction-graph framework and used clique and fractional-clique dimensions to characterize approximate and pure private learnability.  They also asked which other learning-theoretic properties are definable from the contradiction-graph sequence, including whether finite VC dimension is detectable. Later, Moran et al. \cite{MSJ23} observed an equivalence between fractional clique dimension, private learnability, and stable PAC learners.


\section{Definitions and preliminaries}\label{sec:defs}

\subsubsection*{Learning Theory}

We begin with the standard definition of VC dimension.

\begin{definition}[VC dimension~\cite{VapnikChervonenkis1971}]
A finite set $R\subseteq X$ is \emph{shattered} by $H\subseteq \bits^X$ if every labeling $R\to\bits$ is realized by some $h\in H$.  The \emph{VC dimension} of $H$, denoted $\VC(H)$, is
\[
\VC(H)=\sup\{|R|: R\subseteq X \text{ is finite and shattered by }H\},
\]
with the convention that the supremum of the empty set is $0$.  Thus $\VC(H)\in\mathbb N_0\cup\{\infty\}$, where $\mathbb N_0=\{0,1,2,\dots\}$.
\end{definition}

A length-$m$ labeled sequence is a tuple
\[
S=((x_1,y_1),\ldots,(x_m,y_m))\in (X\times\bits)^m.
\]
It is \emph{$H$-realizable} if there is $h\in H$ such that $h(x_i)=y_i$ for all $i$.  Repetitions of domain points are allowed, but realizability forces repeated occurrences of the same point to carry the same label.  We write $(x,b)\in S$ if the signed point $(x,b)$ occurs as an entry of $S$, and
\[
\supp(S)=\{x_i:i\in[m]\}.
\]

\begin{definition}[Contradiction graph~\cite{AlonMoranScheflerYehudayoff2024}]
The \emph{order-$m$ contradiction graph} $G_m(H)$ has as vertices the $H$-realizable length-$m$ labeled sequences.  Two distinct vertices $S,T$ are adjacent if there is a domain point $x\in X$ and a bit $b\in\bits$ such that
\[
(x,b)\in S
\qquad\text{and}\qquad
(x,1-b)\in T.
\]
\end{definition}

We also consider a duplicate-free variant. In this model, datasets with repeated examples are no longer vertices.

\begin{definition}[Duplicate-free contradiction graph]\label{def:df-graph}
The \emph{duplicate-free order-$m$ contradiction graph} $G_m^{\df}(H) = G_m(H)[V']$ is the subgraph of $G_m(H)$ induced by the set $V' \subset V(G_m(H))$ of vertices 

$$V' = \{ S \in V : |\text{supp}(S)| = m \}.$$
\end{definition}

At each level, a permutation of domain points preserves contradiction, as does independently flipping the two labels above each domain point. This gives the following equivalence.

\begin{definition}[Learning-equivalence]\label{def:signed-equivalence}
Let $H\subseteq\bits^X$ and $K\subseteq\bits^Y$ be nonempty concept classes. Given a bijection $\pi:X\to Y$ and a map $\epsilon:X\to\bits$, define the signed relabeling of a partial assignment $p:D\to\bits$ by
\[
\Theta_{\pi,\epsilon}(p):\pi(D)\longrightarrow\{0,1\},\qquad \pi(x)\longmapsto p(x)\oplus\epsilon(x),
\]
and apply the same map entrywise to labeled sequences. We say that $H$ and $K$ are \emph{learning-equivalent}, and write $H\equiv_{\mathrm{le}}K$, if for every $m\geq1$ there is a bijection $\pi_m:X\to Y$ and a map $\epsilon_m:X\to\bits$ such that, for every length-$m$ labeled sequence $S$ over $X$,
\[
S\text{ is $H$-realizable}
\quad\Longleftrightarrow\quad
\Theta_{\pi_m,\epsilon_m}(S)\text{ is $K$-realizable}.
\]
\end{definition}

We call the equivalence relation $\equiv_{\text{le}}$ \textit{learning-equivalence} as it preserves all learning-theoretic properties invariant under signed relabelings and defined in terms of realizable datasets of a fixed sample size. This includes most relevant properties such as VC dimension and Littlestone dimension. 

Lastly, a point that receives the same label from every hypothesis produces no contradictions, and is therefore invisible to adjacency in the contradiction graph. We call such an example \textit{boring}.

\begin{definition}\label{def:boring}
Let $H\subseteq\bits^X$ be nonempty.  A point $x\in X$ is \emph{boring} for $H$ if $h(x)$ is the same for every $h\in H$. A point is \textit{interesting} if it is not boring.  Let
\[
\Boring(H)=\{x\in X:x\text{ is boring for }H\},
\qquad
\Int(H)=X\setminus \Boring(H).
\]
\end{definition}


\subsubsection*{Boolean hypercubes}

\begin{definition}
A \emph{Boolean subcube} of $\bits^m$ is a nonempty set of the form
\[
\{\sigma\in\bits^m:\sigma_i=\tau_i\text{ for all }i\in I\},
\]
where $I\subseteq[m]$ and $\tau\in\bits^I$.  The case $I=\emptyset$ gives the full cube, and the case $I=[m]$ gives a singleton.
\end{definition}

\subsubsection*{Graph theory}

In this paper we use two graph-theoretic properties which, when viewed in $G_m(H)$, are related to learning-theoretic notions of $H$: \textit{cube-trace cliques} and \textit{infinite neighborhoods}

We define cube-trace cliques based on the adjacency behavior of vertices outside the clique to vertices inside the clique, namely, the anti-neighborhood of a vertex.

\begin{definition}
Let $G = (V, E)$ be a simple graph and let $Q\subseteq V$.  For $T\in V$, the \emph{anti-neighborhood} of $T$ on $Q$ is
\[
M_Q(T)=\{S\in Q:S\text{ is not adjacent to }T\}.
\]
Since the graph is simple, a vertex is not adjacent to itself; thus if $T\in Q$ and $Q$ is a clique, then $M_Q(T)=\{T\}$.
\end{definition}

Infinite neighborhoods are used for the learning-equivalence proof.

\begin{definition}
\label{def:inf_neigh}
For a simple graph $G = (V,E)$ and $A\subseteq V$, define the \textit{infinite neighborhood} of $A$ as,
\[
N_\infty(A)
=
\{T\in V:|N_G(T)\cap A|=\infty\}.
\]
\end{definition}

By taking a minimal infinite neighborhood in the duplicate-free graph, which we call an \textit{example set}, we identify the set of vertices containing a particular signed example.

\begin{definition}
\label{def:exam_set}
A set $\mathcal V\subseteq V(G_{m}^{\text{df}}(H))$ is an \emph{example set} if
\[
\mathcal V=N_\infty(A)
\]
for some $A\subseteq V(G_m^{\text{df}}(H))$, $\mathcal V\neq\emptyset$, and $\mathcal V$
is inclusion-minimal among the nonempty sets obtainable in this way.
\end{definition}

\section{Cube-trace cliques}\label{sec:cubetrace}

We prove that $G_m(H)$ determines the predicate $\VC(H) \geq m$ via the presence of \textit{cube-trace cliques}. Whereas every shattered set of $m$ examples gives rise to a clique of size $2^m$ in $G_m(H)$, the converse is not true. A clique that corresponds to a shattered set is much more structured. If an outside sample fixes some of the shattered coordinates, then the clique vertices nonadjacent to it are exactly those cube labels agreeing on those fixed coordinates.  This is precisely a Boolean subcube.  The following definition formalizes this adjacency behavior.

\begin{definition}[Cube-trace clique]\label{def:cubetrace}
Let $m\ge1$, and let $Q\subseteq V(G_m(H))$ be a clique of size $2^m$.  We say that $Q$ is a \emph{cube-trace clique} if there is a bijection
$
\phi:Q\to\bits^m
$
such that, for every vertex $T\in V(G_m(H))$, the set
$
\phi(M_Q(T))\subseteq\bits^m
$
is a Boolean subcube.
\end{definition}

This definition is purely graph-theoretic: it only refers to adjacency and non-adjacency in $G_m(H)$, together with a labeling of the clique by the boolean cube $\bits^m$. Because Boolean subcubes are taken to be nonempty, a cube-trace clique cannot have a vertex $T$ with $M_Q(T)=\emptyset$, that is, a vertex adjacent to every member of $Q$. Figure~\ref{fig:genuine-cube-trace} illustrates the cube-trace clique arising from a shattered two-point set. The clique vertices nonadjacent to $T$ form the Boolean subcube $\{00,01\}$.

\begin{proposition}\label{prop:forward}
If $\VC(H)\ge m$, then $G_m(H)$ contains a cube-trace clique of size $2^m$.
\end{proposition}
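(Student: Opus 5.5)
Since $\VC(H)\ge m$, we fix a shattered set $R=\{r_1,\dots,r_m\}\subseteq X$ of $m$ distinct points. For each $\sigma\in\bits^m$ we take the labeled sequence
\[
S_\sigma=((r_1,\sigma_1),\ldots,(r_m,\sigma_m)),
\]
which is $H$-realizable because $R$ is shattered. We then set $Q=\{S_\sigma:\sigma\in\bits^m\}$ and $\phi(S_\sigma)=\sigma$. The $r_i$ are distinct, so the $S_\sigma$ are pairwise distinct and $|Q|=2^m$. If $\sigma\ne\sigma'$, they differ at some coordinate $i$, and then $(r_i,\sigma_i)\in S_\sigma$ while $(r_i,1-\sigma_i)\in S_{\sigma'}$. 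Hence $Q$ is a clique and $\phi$ is a bijection.

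Next we check the subcube condition. Let $T$ be any vertex of $G_m(H)$, and let $I=\{i\in[m]:r_i\in\supp(T)\}$. Since $T$ is realizable, each $r_i$ with $i\in I$ gets a single well-defined label in $T$; call it $\tau_i$. We claim that
\[
\phi(M_Q(T))=\{\sigma\in\bits^m:\sigma_i=\tau_i\ \text{for all } i\in I\}.
\]
The only points shared by $T$ and $S_\sigma$ are the $r_i$ with $i\in I$. Contradiction between two sequences can occur only at a shared point. So $S_\sigma$ is adjacent to $T$ exactly when $\sigma_i\ne\tau_i$ for some $i\in I$, which proves the claim. The right-hand side is nonempty, since it contains any completion of $\tau$, and so it is a Boolean subcube.

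One detail needs care: adjacency in the paper's definition is between distinct vertices. If $T=S_\sigma$ for some $\sigma$, then $I=[m]$ and $\tau=\sigma$. In that case the formula gives the singleton $\{\sigma\}$, which matches the convention $M_Q(T)=\{T\}$ for $T\in Q$. More generally, $T$ could coincide with some $S_\sigma$ only in this situation, so the non-adjacency of a vertex to itself is consistent with the computed set.

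The main point to handle with care is this edge case, together with the requirement that repeated points inside $T$ carry consistent labels. Consistency is guaranteed by realizability, so $\tau$ is well defined. Once both are in place, the argument is a direct verification.
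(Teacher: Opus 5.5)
Your proposal is correct and follows the paper's proof essentially verbatim. It uses the same clique $\{S_\sigma:\sigma\in\bits^m\}$ built from a shattered set and the identification $S_\sigma\mapsto\sigma$, and computes $M_Q(T)$ as the subcube fixed by $T$'s labels on the shared points $r_i$; your explicit treatment of the case $T\in Q$ and of nonemptiness only spells out details the paper leaves implicit.
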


\begin{proof}
Let $R=\{x_1,\ldots,x_m\}\subseteq X$ be shattered by $H$.  For each $\sigma\in\bits^m$, define the length-$m$ sequence
$
S_\sigma=((x_1,\sigma_1),\ldots,(x_m,\sigma_m)).
$
Since $R$ is shattered, every $S_\sigma$ is $H$-realizable.  Let
\[
Q_R=\{S_\sigma:\sigma\in\bits^m\}.
\]
If $\sigma\ne\tau$, then $\sigma_i\ne\tau_i$ for some $i$, so $S_\sigma$ and $S_\tau$ contradict at $x_i$.  Hence $Q_R$ is a clique of size $2^m$.

Identify $Q_R$ with $\bits^m$ by $S_\sigma\mapsto\sigma$.  Let $T$ be any vertex of $G_m(H)$.  Let $I\subseteq[m]$ be the set of indices $i$ such that $x_i\in\supp(T)$.  For each $i\in I$, realizability of $T$ gives a unique bit $\tau_i$ such that $(x_i,\tau_i)\in T$.  A cube vertex $S_\sigma$ is nonadjacent to $T$ exactly when it agrees with $T$ on all common points $x_i$, namely when
$
\sigma_i=\tau_i$ for every $i\in I$.

Therefore
\[
M_{Q_R}(T)=\{S_\sigma:\sigma_i=\tau_i\text{ for all }i\in I\},
\]
which is a Boolean subcube under the identification $Q_R\cong\bits^m$.
\end{proof}

\begin{figure}[t]
\centering
\begin{tikzpicture}[
    scale=1,
    every node/.style={font=\small},
    cubev/.style={circle,draw,thick,minimum size=8mm,inner sep=0pt},
    subcube/.style={circle,draw,thick,fill=gray!20,minimum size=8mm,inner sep=0pt},
    outside/.style={rectangle,draw,thick,rounded corners,minimum width=13mm,minimum height=8mm},
    edge/.style={thick},
    nonedge/.style={thick,dashed},
    ann/.style={font=\footnotesize}
]

\node[subcube] (00) at (0,0) {$00$};
\node[subcube] (01) at (0,2) {$01$};
\node[cubev]   (10) at (2,0) {$10$};
\node[cubev]   (11) at (2,2) {$11$};

\draw[edge] (00)--(01);
\draw[edge] (00)--(10);
\draw[edge] (00)--(11);
\draw[edge] (01)--(10);
\draw[edge] (01)--(11);
\draw[edge] (10)--(11);

\node[outside] (T) at (-3.2,1) {$T$};

\draw[nonedge] (T)--(00);
\draw[nonedge] (T)--(01);
\draw[edge] (T)--(10);
\draw[edge] (T)--(11);

\node[
    draw,
    dashed,
    rounded corners,
    fit=(00)(01),
    inner sep=6pt
] {};

\node[ann,align=center] at (1,-1.0)
    {$Q_R \cong \{0,1\}^2$};

\node[ann,align=center] at (-1.05,2.75)
    {$M_{Q_R}(T)=\{00,01\}$};

\node[ann,align=center] at (-3.2,1.65)
    {$T$ fixes $x_1=0$};

\end{tikzpicture}
\caption{A shattered two-point set produces a cube-trace clique. The shaded vertices are precisely the members of the clique nonadjacent to $T$; they form the Boolean subcube $\{00,01\}$. Solid lines indicate adjacency, and dashed lines indicate non-adjacency.}
\label{fig:genuine-cube-trace}
\end{figure}

Proposition~\ref{prop:forward} explains the origin of the definition of cube-trace cliques: shattered sets do not only create large cliques, they create large cliques whose traces are coordinate subcubes. The proof of the converse uses the next elementary fact, which is the only Boolean-cube geometry needed. Namely, if two proper subcubes cover the entire cube, then they must be opposite half-cubes.

\begin{lemma}\label{lem:two-subcubes}
Let $m\ge1$, and let $A,B\subseteq\bits^m$ be proper Boolean subcubes such that
$
A\cup B=\bits^m.
$
Then $A$ and $B$ are complementary facets: there is an $i\in[m]$ and $b\in\bits$ such that
\[
A=\{\sigma\in\bits^m:\sigma_i=b\},
\qquad
B=\{\sigma\in\bits^m:\sigma_i=1-b\}.
\]
\end{lemma}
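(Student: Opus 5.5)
The plan is to write each subcube by its fixed coordinates and compare them. Let $A$ fix the coordinates in $I\subseteq[m]$ to values $\alpha\in\bits^I$, and let $B$ fix the coordinates in $J\subseteq[m]$ to values $\beta\in\bits^J$. Since $A$ and $B$ are proper subcubes, $I\neq\emptyset$ and $J\neq\emptyset$. The argument will combine a counting bound with a direct construction of a point lying outside $A\cup B$.

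The first step is counting. We have $|A|=2^{m-|I|}\le 2^{m-1}$, and likewise $|B|\le 2^{m-1}$. Since $|A\cup B|=2^m$, equality must hold throughout. This forces $|I|=|J|=1$ and $A\cap B=\emptyset$. So $A=\{\sigma:\sigma_i=b\}$ and $B=\{\sigma:\sigma_j=c\}$ for some indices $i,j$ and bits $b,c$.

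The second step uses disjointness to pin down $i,j$ and $c$. Suppose $i\neq j$. Then the point with $\sigma_i=b$ and $\sigma_j=c$ lies in $A\cap B$, which contradicts $A\cap B=\emptyset$. Hence $i=j$. Disjointness then rules out $c=b$, so $c=1-b$. This is exactly the claimed pair of complementary facets.

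There is an alternative that avoids counting. If some coordinate $k$ lies in $I\cap J$ with $\alpha_k=\beta_k$, then a point whose $k$-th coordinate is $1-\alpha_k$ lies in neither subcube. If $|I|\ge2$, then each point of $B$ must agree with $\alpha$ on all of $I$ or escape $A$; we can choose a point that differs from $\alpha$ on an index of $I$ while violating $\beta$, giving a point outside both. The counting argument is cleaner, and I expect no real obstacle. The only care needed is to record that a proper subcube has $I\neq\emptyset$, so that $|A|\le 2^{m-1}$.
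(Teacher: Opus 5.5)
Your proposal is correct and follows the paper's proof. Both use the same volume count to force $|A|=|B|=2^{m-1}$ and $A\cap B=\emptyset$, and then use disjointness of two facets to force the same coordinate with opposite bits; the paper simply asserts this last step, and you verify it explicitly by exhibiting a common point when $i\neq j$.
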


\begin{proof}
Every proper Boolean subcube of $\bits^m$ has size at most $2^{m-1}$.  Since $A\cup B=\bits^m$,
\[
2^m=|A\cup B|\le |A|+|B|\le 2^{m-1}+2^{m-1}=2^m.
\]
Thus equality holds throughout.  In particular, both $A$ and $B$ have size $2^{m-1}$, so each is a codimension-one subcube, i.e. a facet.  The equality also implies $A\cap B=\emptyset$.  Two facets of a Boolean cube are disjoint only when they fix the same coordinate to opposite bits.  Hence they are complementary facets.
\end{proof}

\begin{theorem}\label{thm:reverse}
If $G_m(H)$ contains a cube-trace clique of size $2^m$, then $\VC(H)\ge m$.
\end{theorem}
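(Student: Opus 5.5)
The plan is to probe the cube-trace clique $Q$ with the ``constant'' vertices $T_{x,b}=((x,b),\ldots,(x,b))$. Such a vertex is $H$-realizable whenever some $h\in H$ has $h(x)=b$. For every $S\in Q$, $S$ is nonadjacent to $T_{x,b}$ iff $(x,1-b)\notin S$. So
\[
M_Q(T_{x,b})=\{S\in Q:(x,1-b)\notin S\}.
\]
Let $\phi:Q\to\bits^m$ be the bijection witnessing the cube-trace property. I will extract $m$ domain points whose labels encode the $m$ coordinates of $\phi$, and then conclude that these points are shattered.

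\textbf{Step 1 (contradiction points give facet pairs).} Suppose $S,S'\in Q$ contradict at $x$, say $(x,b)\in S$ and $(x,1-b)\in S'$. Then both labels of $x$ are realized, so $T_{x,0}$ and $T_{x,1}$ are both vertices of $G_m(H)$. Moreover, $S'\notin M_Q(T_{x,b})$ and $S\notin M_Q(T_{x,1-b})$, so both traces are \emph{proper} subsets of $Q$. Since $\phi$ maps each trace to a Boolean subcube, both images are proper subcubes. Their union is all of $Q$, because no realizable $S\in Q$ contains both $(x,0)$ and $(x,1)$. Lemma~\ref{lem:two-subcubes} then gives a coordinate $i(x)\in[m]$ and a bit $c(x)$ with
\[
\phi(M_Q(T_{x,0}))=\{\sigma:\sigma_{i(x)}=c(x)\},\qquad \phi(M_Q(T_{x,1}))=\{\sigma:\sigma_{i(x)}=1-c(x)\}.
\]
These two traces are disjoint. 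A member outside $M_Q(T_{x,0})$ contains $(x,1)$, and a member outside $M_Q(T_{x,1})$ contains $(x,0)$. Hence \emph{every} $S\in Q$ contains $x$, with label $1$ if $\phi(S)_{i(x)}=c(x)$ and label $0$ otherwise. In other words, the label of $x$ in $S$ is a fixed bijective function of $\phi(S)_{i(x)}$.

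\textbf{Step 2 (every coordinate is realized).} Fix $j\in[m]$, and take $\sigma,\sigma'$ that differ only in coordinate $j$. Since $Q$ is a clique, $S=\phi^{-1}(\sigma)$ and $S'=\phi^{-1}(\sigma')$ contradict at some point $x_j$. By Step 1, $S$ and $S'$ give $x_j$ different labels, and these labels depend only on the $i(x_j)$-th coordinate. Since $\sigma,\sigma'$ agree off coordinate $j$, this forces $i(x_j)=j$. The points $x_1,\ldots,x_m$ are pairwise distinct, because a single point has a single coordinate $i(x)$.

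\textbf{Step 3 (shattering).} By Step 1, each $S\in Q$ contains all of $x_1,\ldots,x_m$, with the label on $x_j$ equal to $\phi(S)_j$ possibly flipped by a fixed bit $\epsilon_j$. As $\phi(S)$ ranges over all of $\bits^m$, the restrictions of the realizing hypotheses to $\{x_1,\ldots,x_m\}$ range over all $2^m$ labelings. So $\{x_1,\ldots,x_m\}$ is shattered and $\VC(H)\ge m$.

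The main obstacle is Step 1. There I must verify that both traces are proper, so that Lemma~\ref{lem:two-subcubes} applies; the properness comes from the contradiction witnessed inside $Q$. I must also extract from disjointness the fact that every clique member contains $x$. This is exactly where the cube-trace condition on the outside vertices $T_{x,b}$ is used.
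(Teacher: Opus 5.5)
Your argument is correct. Its first half is the paper's argument: the same constant probes $T_{x,b}$, the same application of Lemma~\ref{lem:two-subcubes}, and the same conclusion that every point at which two members of $Q$ contradict lies in the support of every member of $Q$.

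There is one harmless slip in Step 1. With your normalization, a member with $\phi(S)_{i(x)}=c(x)$ lies in $M_Q(T_{x,0})$, so it labels $x$ by $0$, not $1$. Only the bijectivity of this dependence is used later, so nothing breaks.

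The finish differs from the paper's. The paper never tracks the coordinate $i(x)$. It takes the set $B_0$ of all such informative points and bounds it from both sides:
\begin{itemize}
\item $|B_0|\le m$, because $B_0$ lies in the support of any single member;
\item $|B_0|\ge m$, because distinct members contradict at some informative point, so restriction to $B_0$ is injective on $Q$.
\end{itemize}
It then concludes by cardinality that the members of $Q$ realize all of $\bits^{B_0}$. You instead use the cube structure carried by $\phi$. You choose neighbors $\sigma,\sigma'$ differing in one coordinate to locate, for each coordinate $j$, a point $x_j$ with $i(x_j)=j$. You then read off shattering from the surjectivity of $\phi$.

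The paper's counting avoids any bookkeeping of which coordinate each point controls. Your route yields a bit more: the witnessing bijection $\phi$ is, up to flipping individual coordinates, the canonical identification of Proposition~\ref{prop:forward}, which sends $S$ to the labels it assigns to $x_1,\dots,x_m$.
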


\begin{proof}
Let $Q\subseteq V(G_m(H))$ be a cube-trace clique, and fix a witnessing bijection
$
\phi:Q\to\bits^m.
$
For a domain point $x\in X$ and a bit $b\in\bits$, define
$
Q_x^b=\{S\in Q:(x,b)\in S\}.
$
Call $x$ \emph{informative} if both $Q_x^0$ and $Q_x^1$ are nonempty.

Fix an informative point $x$.  Since some vertex of $Q$ contains $(x,0)$ and some vertex of $Q$ contains $(x,1)$, both labels of $x$ are realized by concepts in $H$.  Hence the repeated sequences
\[
T_x^0=((x,0),\ldots,(x,0)),
\qquad
T_x^1=((x,1),\ldots,(x,1))
\]
are vertices of $G_m(H)$. 

A vertex $S\in Q$ is adjacent to $T_x^0$ exactly when $(x,1)\in S$.  Therefore
$
M_Q(T_x^0)=Q\setminus Q_x^1.
$
Similarly,
$
M_Q(T_x^1)=Q\setminus Q_x^0.
$
By cube-traceness, the images
\[
A:=\phi(M_Q(T_x^0)),
\qquad
B:=\phi(M_Q(T_x^1))
\]
are Boolean subcubes of $\bits^m$.  Since $x$ is informative, both $Q_x^0$ and $Q_x^1$ are nonempty, so $A$ and $B$ are proper subcubes.  Also, no realizable sequence can contain both $(x,0)$ and $(x,1)$, so $Q_x^0\cap Q_x^1=\emptyset$.  Hence
\[
M_Q(T_x^0)\cup M_Q(T_x^1)
=(Q\setminus Q_x^1)\cup(Q\setminus Q_x^0)
=Q.
\]
Thus $A\cup B=\bits^m$.  By Lemma~\ref{lem:two-subcubes}, $A$ and $B$ are complementary facets.  Taking complements in $\bits^m$, the sets $\phi(Q_x^0)$ and $\phi(Q_x^1)$ are also complementary facets.  In particular,
$
Q_x^0\cup Q_x^1=Q.
$
So every informative point appears in every vertex of $Q$.

Let $B_0\subseteq X$ be the set of informative points.  Since every informative point appears in every vertex of $Q$, for any fixed $S_0\in Q$ we have
$
B_0\subseteq \supp(S_0),
$
and therefore $|B_0|\le m$.
Now take two distinct vertices $S,S'\in Q$.  Because $Q$ is a clique, $S$ and $S'$ contradict at some domain point $x$.  That point is informative, since one of $S,S'$ contains $(x,0)$ and the other contains $(x,1)$.  Therefore $S$ and $S'$ induce different labelings on $B_0$.  The map
\[
S\longmapsto \bigl(\text{the labels assigned by }S\text{ to the points of }B_0\bigr)
\]
is injective from $Q$ into $\bits^{B_0}$.  Consequently
$
2^m=|Q|\le 2^{|B_0|},
$
so $|B_0|\ge m$.  Combining this with $|B_0|\le m$ gives $|B_0|=m$.

Every vertex of $Q$ contains every point of $B_0$, and $|B_0|=m$.  Since each vertex is a sequence of length $m$, each vertex uses exactly the support $B_0$, with each point appearing exactly once.  The injective map $Q\to\bits^{B_0}$ is now a bijection, because both sets have size $2^m$.  Hence the vertices of $Q$ realize all $2^m$ labelings of $B_0$.  Therefore $B_0$ is shattered by $H$, and $\VC(H)\ge m$.
\end{proof}

Combining Proposition~\ref{prop:forward} and Theorem~\ref{thm:reverse} gives the graph-intrinsic characterization.

\begin{corollary}\label{cor:level-m}
For every binary class $H$ and every $m\ge1$,
$
\VC(H)\ge m$
if and only if
$G_m(H)$ contains a cube-trace clique of size $2^m$.
\end{corollary}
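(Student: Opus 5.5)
The corollary is the conjunction of the two directions already proved, so the plan is simply to combine them. For the forward implication, assume $\VC(H)\ge m$. By the definition of VC dimension as a supremum over finite shattered sets, there is a finite shattered set of size at least $m$. Since shattering is inherited by subsets (restricting a labeling of $R'\subseteq R$ extends to some labeling of $R$, which is realized), we may pass to a shattered set $R$ of size exactly $m$. Proposition~\ref{prop:forward} then yields the cube-trace clique $Q_R$ of size $2^m$ in $G_m(H)$.

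One small point needs checking here. The proof of Proposition~\ref{prop:forward} starts from a shattered set of size exactly $m$, so the reduction from ``some shattered set of size $\ge m$'' to ``size exactly $m$'' should be stated explicitly. This also covers the case $\VC(H)=\infty$, where arbitrarily large finite shattered sets exist.

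For the reverse implication, assume $G_m(H)$ contains a cube-trace clique of size $2^m$. Theorem~\ref{thm:reverse} gives $\VC(H)\ge m$ directly. The hypothesis $m\ge1$ matches the standing assumption of Definition~\ref{def:cubetrace} and of Lemma~\ref{lem:two-subcubes}, which Theorem~\ref{thm:reverse} uses.

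There is no real obstacle. The only point requiring care is the monotonicity of shattering used to get a shattered set of size exactly $m$, and that is immediate.
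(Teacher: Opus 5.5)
Your proposal is correct and matches the paper's proof, which simply combines Proposition~\ref{prop:forward} for the forward direction with Theorem~\ref{thm:reverse} for the reverse. You also reduce explicitly from a shattered set of size at least $m$ to one of size exactly $m$, using the fact that subsets of shattered sets are shattered. The paper leaves that step implicit in Proposition~\ref{prop:forward}, and you handle it correctly.
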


\begin{proof}
The forward implication is Proposition~\ref{prop:forward}, and the reverse implication is Theorem~\ref{thm:reverse}.
\end{proof}


Corollary~\ref{cor:level-m} is expressed entirely in the language of the graph $G_m(H)$: it asserts the existence of a clique of size $2^m$ with a specified trace pattern.  Therefore it is preserved by graph isomorphism.  This turns the structural characterization into the following statement.

\begin{theorem}\label{thm:main}
For every binary concept class $H$ and every $m\ge1$, the single graph $G_m(H)$ determines whether $\VC(H)\ge m$.  Equivalently, if
$
G_m(H)\cong G_m(K),
$
then
$
\VC(H)\ge m$
if and only if
$\VC(K)\ge m.
$
\end{theorem}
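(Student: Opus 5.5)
The plan is to derive Theorem~\ref{thm:main} from Corollary~\ref{cor:level-m}. The only point needing care is that ``containing a cube-trace clique of size $2^m$'' is invariant under graph isomorphism. Once that is checked, the equivalence of $\VC(H)\ge m$ and $\VC(K)\ge m$ follows immediately.

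First, fix an isomorphism $\psi:G_m(H)\to G_m(K)$. Suppose $Q\subseteq V(G_m(H))$ is a cube-trace clique of size $2^m$, witnessed by a bijection $\phi:Q\to\bits^m$. I will show that $\psi(Q)$ is a cube-trace clique in $G_m(K)$, witnessed by $\phi\circ\psi^{-1}|_{\psi(Q)}$.

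Since $\psi$ preserves adjacency and is a bijection, $\psi(Q)$ is a clique of size $2^m$. Moreover, for every vertex $T'$ of $G_m(K)$, writing $T'=\psi(T)$, the anti-neighborhoods correspond:
\[
M_{\psi(Q)}(T')=\psi\bigl(M_Q(T)\bigr).
\]
This holds because $S\in Q$ is nonadjacent to $T$ if and only if $\psi(S)$ is nonadjacent to $\psi(T)$. It also covers the case $S=T$, since non-adjacency of a vertex with itself is preserved. Hence
\[
(\phi\circ\psi^{-1})\bigl(M_{\psi(Q)}(T')\bigr)=\phi(M_Q(T)),
\]
which is a Boolean subcube by hypothesis. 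The reverse direction follows by applying the same argument to $\psi^{-1}$.

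Combining this with Corollary~\ref{cor:level-m}: if $\VC(H)\ge m$, then $G_m(H)$ has a cube-trace clique, so $G_m(K)$ has one, and therefore $\VC(K)\ge m$. The converse direction is symmetric.

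The ``determines'' phrasing in the first sentence of the theorem is just this isomorphism-invariance restated. There is no real obstacle. The only subtlety is confirming that the definition of a cube-trace clique quantifies over all vertices of the ambient graph using adjacency alone, so that an isomorphism carries the whole quantifier domain to the whole quantifier domain. It does not matter that $H$ and $K$ may live on different domains, because the definition never refers to domain points.
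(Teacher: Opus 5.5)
Your proposal is correct and follows essentially the same route as the paper. The paper also transports the cube-trace clique along the isomorphism, uses the witness $\phi\circ f^{-1}$, checks that anti-neighborhoods correspond via $M_{f(Q)}(T')=f\bigl(M_Q(f^{-1}(T'))\bigr)$, and then invokes Corollary~\ref{cor:level-m} in both directions.
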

\begin{proof}
Let $G_m(H)\cong G_m(K)$, and suppose $\VC(H)\ge m$.  By Corollary~\ref{cor:level-m}, $G_m(H)$ contains a cube-trace clique $Q$ of size $2^m$.  Let
$
f:G_m(H)\to G_m(K)
$
be a graph isomorphism, and let $\phi:Q\to\bits^m$ witness cube-traceness of $Q$.  We claim that
$
\phi\circ f^{-1}:f(Q)\to\bits^m
$
witnesses cube-traceness of $f(Q)$.  Indeed, for every $T'\in V(G_m(K))$,
$
M_{f(Q)}(T')=f\bigl(M_Q(f^{-1}(T'))\bigr),
$
because $f$ preserves adjacency and non-adjacency.  Therefore
\[
(\phi\circ f^{-1})(M_{f(Q)}(T'))
=\phi\bigl(M_Q(f^{-1}(T'))\bigr),
\]
which is a Boolean subcube by the cube-traceness of $Q$.  Hence $G_m(K)$ contains a cube-trace clique of size $2^m$, so Corollary~\ref{cor:level-m} gives $\VC(K)\ge m$.  The reverse implication is symmetric.
\end{proof}

\begin{corollary}\label{cor:exact}
If $G_m(H)\cong G_m(K)$ for every $m\ge1$, then
$
\VC(H)=\VC(K).
$
\end{corollary}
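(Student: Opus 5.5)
The plan is to read Corollary~\ref{cor:exact} directly off Theorem~\ref{thm:main}, by expressing $\VC$ through the threshold predicates $\VC(\cdot)\ge m$ for $m\ge1$. Assume $G_m(H)\cong G_m(K)$ for every $m\ge1$. Applying Theorem~\ref{thm:main} at each level $m$ separately gives
\[
\VC(H)\ge m \iff \VC(K)\ge m \qquad\text{for every } m\ge1 .
\]
The isomorphisms at different levels are not required to be compatible. Theorem~\ref{thm:main} is a statement about a single level, so this causes no difficulty.

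Next I identify $\VC$ with its threshold set. For any class $H$, define $D(H)=\{m\ge1:\VC(H)\ge m\}$. Since $\VC(H)\in\mathbb N_0\cup\{\infty\}$, there are two cases.
\begin{itemize}
\item If $\VC(H)=d$ is finite, then $D(H)=\{1,\dots,d\}$; this set is empty when $d=0$.
\item If $\VC(H)=\infty$, then $D(H)=\mathbb N_{\ge1}$.
\end{itemize}
So $\VC(H)=\sup D(H)$, with the convention $\sup\emptyset=0$, and this is consistent with the definition of VC dimension in the paper. The displayed equivalence says exactly that $D(H)=D(K)$, and hence $\VC(H)=\VC(K)$.

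I expect no real obstacle. The only points needing care are the boundary cases. When $\VC=0$, no level $m\ge1$ witnesses anything, and both classes have empty threshold sets. When $\VC=\infty$, both threshold sets are all of $\mathbb N_{\ge1}$. This infinite case is precisely the finite-versus-infinite distinction asked for in Problem~\ref{prob:amsy}.
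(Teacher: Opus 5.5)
Your proposal is correct and follows the paper's proof: it applies Theorem~\ref{thm:main} at each level $m$ separately to get $\VC(H)\ge m \iff \VC(K)\ge m$, and then recovers $\VC$ from its threshold set. Your formulation via $D(H)$ with $\VC(H)=\sup D(H)$, including the $\VC=0$ and $\VC=\infty$ cases, is a slightly tidier version of the paper's case analysis.
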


\begin{proof}
By Theorem~\ref{thm:main}, for every $m\ge1$,
$\VC(H)\ge m$
if and only if $\VC(K)\ge m.$
Thus the two classes have exactly the same positive VC-thresholds.  If there are arbitrarily large such thresholds, both VC dimensions are infinite.  Otherwise, the largest positive threshold is the same for both classes; if there is no positive threshold, both VC dimensions are $0$ by the convention in the definition of VC dimension.  Hence $\VC(H)=\VC(K)$ in $\mathbb N_0\cup\{\infty\}$.
\end{proof}

\begin{corollary}\label{cor:amsy}
The pair of classes in Problem~\ref{prob:amsy} does not exist.
\end{corollary}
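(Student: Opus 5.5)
The plan is a direct proof by contradiction that invokes Corollary~\ref{cor:exact}. Suppose classes $H_0,H_1$ as in Problem~\ref{prob:amsy} existed. The hypothesis $(G_m(H_0))_{m\ge1}\cong (G_m(H_1))_{m\ge1}$ is read in the natural levelwise sense: for every $m\ge1$ there is a graph isomorphism $G_m(H_0)\cong G_m(H_1)$. No compatibility between the isomorphisms at different levels is needed, since Theorem~\ref{thm:main} and hence Corollary~\ref{cor:exact} only use one isomorphism per level.

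With this in place, Corollary~\ref{cor:exact} applied to $H=H_0$ and $K=H_1$ gives $\VC(H_0)=\VC(H_1)$ as elements of $\mathbb N_0\cup\{\infty\}$. This contradicts $\VC(H_0)<\infty=\VC(H_1)$. We can also phrase it without appealing to exact equality: since $\VC(H_0)=d<\infty$, we have $\VC(H_1)\ge d+1$ but $\VC(H_0)\not\ge d+1$. This directly violates Theorem~\ref{thm:main} at level $m=d+1$, because $G_{d+1}(H_0)\cong G_{d+1}(H_1)$.

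There is essentially no obstacle here. The only point to state explicitly is the interpretation of isomorphism of sequences as levelwise isomorphism, which is the weakest reasonable reading and so makes the nonexistence claim strongest. We should also note that the classes in Problem~\ref{prob:amsy} are arbitrary binary classes, so the hypotheses of Corollary~\ref{cor:exact} are met without further conditions such as nonemptiness or domain size. If one wanted to cover the degenerate case of an empty class, both graphs would be empty and both VC dimensions would be $0$ by convention, so the contradiction persists.
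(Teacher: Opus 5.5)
Your proposal is correct and matches the paper's proof: assuming such $H_0,H_1$ exist, Corollary~\ref{cor:exact} gives $\VC(H_0)=\VC(H_1)$, which contradicts $\VC(H_0)<\infty=\VC(H_1)$. Your alternative phrasing via Theorem~\ref{thm:main} at level $m=\VC(H_0)+1$, and your remark that only levelwise isomorphisms are needed, are both accurate.
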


\begin{proof}
If such $H_0,H_1$ existed, Corollary~\ref{cor:exact} would give $\VC(H_0)=\VC(H_1)$, contradicting $\VC(H_0)<\infty$ and $\VC(H_1)=\infty$.
\end{proof}

We end by remarking that, in infinite domains, that is, when $|X| = \infty$, we may replace a \textit{probe} $((x,0),...,(x,0))$ in the proof of Theorem \ref{thm:reverse} with a vertex $((x,0), (z_1,h(z_1)), ..., (z_{m-1}, h(z_{m-1})))$ where $z_i \not\in \cup_{S \in Q} \ \text{supp}(S)$ are distinct for all $i \in[m-1]$ and $h\in H$ is any concept realizing $h(x) = 0$. As this is the only usage of datasets containing repeated examples in the argument, the conclusion Corollary \ref{cor:exact} also holds for duplicate-free graphs in infinite domains. In the next section, we prove that this same behavior holds for a broader class of learning-theoretic properties including VC dimension.

\section{Learning-equivalence from contradiction graphs}\label{sec:reconstruction}

In this section, we prove that the sequence of contradiction graphs $(G_m(H))_{m\geq1}$ determines the datasets realizable by $H$ at every fixed level up to learning-equivalence (Definition~\ref{def:signed-equivalence}). Thus, all learning-theoretic properties determined by realizable samples of a fixed size and invariant under signed relabeling are captured by the contradiction-graph sequence. As an example, we demonstrate in Section~\ref{sec:littlestone} an exact graph-theoretic characterization of Littlestone dimension.

\begin{theorem}\label{thm:reconstruction-main}
Let $H\subseteq\bits^X$ and $K\subseteq\bits^Y$ be nonempty concept classes. Then the following are equivalent.
\begin{enumerate}[label=(\roman*)]
\item $G_m(H)\cong G_m(K)$ for every $m\geq1$.
\item $H\equiv_{\mathrm{le}}K$.
\end{enumerate}
\end{theorem}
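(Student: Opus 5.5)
The direction (ii)$\Rightarrow$(i) is a direct check. Fix $m$ and the pair $(\pi_m,\epsilon_m)$ given by learning-equivalence. Then $\Theta_{\pi_m,\epsilon_m}$ maps $H$-realizable length-$m$ sequences bijectively onto $K$-realizable ones. It preserves contradiction, because $(x,b)\in S$ and $(x,1-b)\in T$ hold if and only if $(\pi(x),b\oplus\epsilon(x))\in\Theta(S)$ and $(\pi(x),1-b\oplus\epsilon(x))\in\Theta(T)$. Hence it is an isomorphism $G_m(H)\cong G_m(K)$.

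For (i)$\Rightarrow$(ii), I will work one level at a time: from an isomorphism $f:G_m(H)\to G_m(K)$ alone I will build $\pi_m,\epsilon_m$. For a vertex $S$, let $P(S)$ be its set of \emph{interesting} signed entries, i.e.\ pairs $(x,b)\in S$ with $x\in\Int(H)$. The key observation is that $N(S)=\bigcup_{(x,b)\in P(S)}\{T:(x,1-b)\in T\}$. Write $N_{x,b}=\{T:(x,1-b)\in T\}$, which is the neighborhood of the probe $T_x^b=((x,b),\dots,(x,b))$.

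\textbf{Step 1: reading off signed points.} I claim that $(x,b)\in P(S)$ holds if and only if $N(T_x^b)\subseteq N(S)$. The forward direction is the union formula above. For the converse, $T_x^{1-b}\in N(T_x^b)$, and $T_x^{1-b}$ is adjacent to $S$ only when $(x,b)\in S$. Consequently, the vertices with nonempty, inclusion-minimal neighborhood are exactly those with $|P(S)|=1$. Two such vertices have equal neighborhoods if and only if they carry the same interesting signed point, since $T_x^{1-b}$ separates distinct signed points. Let $\mathcal C$ be the resulting classes (vertices grouped by equal minimal neighborhood). Then $\mathcal C$ is in canonical bijection with the signed interesting points $\{(x,b):x\in\Int(H)\}$. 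The pairing $(x,b)\leftrightarrow(x,1-b)$ is also graph-intrinsic: two classes in $\mathcal C$ are adjacent (any, equivalently all, members) exactly when they are the two signs of the same point. All of this is phrased purely in terms of adjacency, so $f$ carries $\mathcal C_H$ onto $\mathcal C_K$ and respects the pairing. This yields a bijection $\pi:\Int(H)\to\Int(K)$ together with signs $\epsilon$ on $\Int(H)$. The signs are chosen so that the class of $(x,b)$ maps to the class of $(\pi(x),b\oplus\epsilon(x))$. Moreover, since membership $(x,b)\in P(S)$ is graph-definable, we get $P(f(S))=\Theta_{\pi,\epsilon}(P(S))$ for every vertex $S$.

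\textbf{Step 2: boring points.} The isolated vertices of $G_m(H)$ are exactly the sequences supported on $\Boring(H)$ with their forced labels. There are $|\Boring(H)|^m$ of them, which is $|\Boring(H)|$ when that set is infinite and $0$ when it is empty. So $f$ forces $|\Boring(H)|=|\Boring(K)|$. I then extend $\pi$ by any bijection $\Boring(H)\to\Boring(K)$, choosing $\epsilon$ on each boring point so that its forced label is sent to the forced label of its image.

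\textbf{Step 3: realizability.} A length-$m$ sequence $S$ is $H$-realizable if and only if two conditions hold: its boring entries carry the forced labels, and the signed set $P(S)$, of size at most $m$, is realizable. Such a set is realizable if and only if it equals $P(V)$ for some vertex $V$, obtained by padding with repeats. By Step 1, this happens if and only if $\Theta(P(S))$ equals $P(V')$ for some vertex $V'$ of $G_m(K)$. Together with Step 2, this gives that $S$ is $H$-realizable exactly when $\Theta_{\pi,\epsilon}(S)$ is $K$-realizable, which is the condition in Definition~\ref{def:signed-equivalence}.

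I expect the main obstacle to be Step 1. The difficulty is making the identification of signed points genuinely graph-intrinsic despite two complications: many vertices share the same minimal neighborhood (repeats, boring padding), and there may be duplicated or complementary columns in $H$. The separating probes $T_x^{1-b}$ are what make it work, and they are exactly where repeated examples are used.
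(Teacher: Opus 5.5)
Your proof is correct and is essentially the paper's argument: the classes of minimal vertices under equal neighborhoods (Lemma~\ref{lem:minimal-one-moving}) stand for the interesting signed examples, and complementary classes are detected by adjacency. The identity $P(f(S))=\Theta_{\pi,\epsilon}(P(S))$ is then transferred through $f$ and realizability is checked by padding with repeated examples, exactly as in the proof of Theorem~\ref{thm:reconstruction-main} via Lemma~\ref{lem:fixed-level-reconstruction}. The one deviation is that you get $|\Boring(H)|=|\Boring(K)|$ by counting the $|\Boring(H)|^m$ isolated vertices of $G_m$ rather than from $G_1$; this makes each level self-contained and lets $m=1$ go through the same argument instead of the separate edge-component argument.
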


As aforementioned, the result of Theorem \ref{thm:reconstruction-main} also holds for duplicate-free graphs $G_m^{\text{df}}(H)$ (Definition \ref{def:df-graph}) when we assume the graph is defined over an infinite domain.

\begin{theorem}\label{thm:reconstruction-df}
Let $H\subseteq\bits^X$ and $K\subseteq\bits^Y$ be nonempty concept classes with $X,Y$ infinite. Then the following are equivalent.
\begin{enumerate}[label=(\roman*)]
\item $G_m^{\text{df}}(H)\cong G_m^{\text{df}}(K)$ for every $m\geq1$.
\item $H\equiv_{\mathrm{le}}K$.
\end{enumerate}
\end{theorem}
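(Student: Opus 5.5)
The direction (ii)$\Rightarrow$(i) is routine. Given $\pi_m,\epsilon_m$, the map $S\mapsto\Theta_{\pi_m,\epsilon_m}(S)$ is a bijection on length-$m$ sequences. It preserves $|\supp(S)|$, and it preserves contradiction, since $(x,b),(x,1-b)$ go to $(\pi_m x,b\oplus\epsilon_m(x)),(\pi_m x,1-b\oplus\epsilon_m(x))$. By hypothesis it also preserves realizability, so it restricts to an isomorphism $G_m^{\df}(H)\cong G_m^{\df}(K)$. All the work is in (i)$\Rightarrow$(ii). The plan is to recover, graph-theoretically and inside $G_m^{\df}(H)$, the family of sets $V_{y,c}=\{S:(y,c)\in S\}$ for interesting signed points $(y,c)$, together with the pairing $(y,0)\leftrightarrow(y,1)$. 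From this the level-$m$ realizable samples can be read off.

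\emph{Step 1: computing infinite neighborhoods.} For $A\subseteq V(G_m^{\df}(H))$, let $E(A)$ be the set of signed points $(y,c)$ such that $(y,1-c)$ occurs in infinitely many members of $A$. I claim
\[
N_\infty(A)=\bigcup_{(y,c)\in E(A)}V_{y,c}.
\]
For "$\subseteq$": a vertex $T$ has only $m$ signed entries, and each neighbor in $A$ contradicts $T$ at one of them. If infinitely many members of $A$ are neighbors, pigeonhole gives some $(y,c)\in T$ whose opposite lies in infinitely many members of $A$. For "$\supseteq$": every member of $A$ containing $(y,1-c)$ is adjacent to any $T\ni(y,c)$ other than itself, so $T$ has infinitely many neighbors in $A$.

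Two facts use that $X$ is infinite and depend on padding. First, for interesting $(y,c)$, the set $V_{y,c}$ is nonempty: take $h\in H$ with $h(y)=c$ and append $m-1$ fresh points labeled by $h$. Second, $V_{y,c}=N_\infty(V_{y,1-c})$, because $(y,c)\in E(V_{y,1-c})$: padding with fresh points produces infinitely many vertices containing $(y,1-c)$. Hence every nonempty $N_\infty(A)$ is a union of sets $V_{y,c}$ with $(y,c)$ interesting. Moreover, for $m\ge2$ the sets $V_{y,c}$ are pairwise incomparable, since we can choose a vertex containing $(y,c)$ whose padding avoids $y'$. It follows that the example sets are exactly the sets $V_{y,c}$ with $(y,c)$ interesting, and $(y,c)\mapsto V_{y,c}$ is a bijection onto them.

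For $m=1$ this reasoning is not needed: the interesting singletons are exactly the vertices of $G_1^{\df}$ that have a neighbor, and the pairing is given by the edges. The vertices of $G_1^{\df}$ with no neighbor are in bijection with $\Boring(H)$, so $G_1^{\df}$ also determines $|\Int(H)|$ and $|\Boring(H)|$. Identifying this Step 1 structure correctly, and in particular handling minimality, is the main obstacle. It is also the only place the infinite-domain hypothesis enters, which is consistent with Example~\ref{ex:df-not-reconstructive}.

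\emph{Step 2: recovering the pairing and building $\pi_m,\epsilon_m$.} For $m\ge2$, two example sets $V_{y,c}$ and $V_{y',c'}$ are completely joined (every vertex of one is adjacent to every vertex of the other) if and only if $y=y'$ and $c'=1-c$. Indeed, if $y\ne y'$, pick $S\ni(y,c)$ and $T\ni(y',c')$ padded with disjoint fresh points avoiding $y,y'$; then $S$ and $T$ do not contradict. An isomorphism $f_m:G_m^{\df}(H)\to G_m^{\df}(K)$ maps example sets to example sets, since these are defined purely in terms of adjacency. So $f_m$ induces a bijection between the interesting signed points of $H$ and of $K$ that respects opposite pairs. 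This gives a bijection $\pi_m:\Int(H)\to\Int(K)$ and signs $\epsilon_m$ on $\Int(H)$. We extend $\pi_m$ to a bijection $\Boring(H)\to\Boring(K)$, using the cardinalities from $G_1^{\df}$. On each boring $x$ we choose $\epsilon_m(x)$ so that the label forced by $H$ at $x$ is sent to the label forced by $K$ at $\pi_m(x)$.

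\emph{Step 3: verifying realizability.} A length-$m$ sequence $S$, possibly with repetitions, is $H$-realizable if and only if its set of signed points is a consistent partial assignment $p$ that extends to some $h\in H$. The boring part of $p$ is realizable exactly when it matches the forced labels, and this is preserved by the choice of $\epsilon_m$. Let $p_I$ be the interesting part of $p$, with $|\operatorname{dom}p_I|=k\le m$. Then $p_I$ is $H$-realizable if and only if $\bigcap_{(y,c)\in p_I}V_{y,c}\neq\emptyset$ in $G_m^{\df}(H)$. For "only if", pad a realizing $h$ with fresh points; for "if", any vertex in the intersection is realizable and contains $p_I$. Here we also use that the boring part imposes no further constraint, because boring points agree with every concept. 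Since $f_m$ carries each $V_{y,c}$ to $V_{\Theta(y,c)}$, this nonemptiness condition transfers to $K$. Therefore $S$ is $H$-realizable if and only if $\Theta_{\pi_m,\epsilon_m}(S)$ is $K$-realizable, for every $m$, which gives $H\equiv_{\mathrm{le}}K$.
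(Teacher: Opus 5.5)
There is a genuine error at the step you flagged as the crux. The rest of your architecture matches the paper's: the union formula for $N_\infty$, identifying example sets, the complete-join pairing, transferring nonemptiness of $\bigcap V_{y,c}$ through $f_m$, and handling boring points and $m=1$ via $G_1^{\df}$.

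\textbf{The false claim.} The identity $V_{y,c}=N_\infty(V_{y,1-c})$ fails for $m\ge 3$. Your ``because'' clause only gives the inclusion $V_{y,c}\subseteq N_\infty(V_{y,1-c})$. When $m\ge3$, take any signed point $(z,d)$ with $z\neq y$ that some $h\in H$ realizes together with $(y,1-c)$. It occurs in infinitely many members of $V_{y,1-c}$, obtained by padding with $m-2$ fresh points. Hence $(z,1-d)\in E(V_{y,1-c})$ as well. Concretely, let $H=\bits^{\mathbb N}$ and $m=3$. The vertex $((2,0),(3,0),(4,0))$ is adjacent to every $((1,0),(2,1),(n,0))$ with $n\ge5$, and all of these lie in $V_{1,0}$. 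So this vertex lies in $N_\infty(V_{1,0})$ but not in $V_{1,1}$; in fact $N_\infty(V_{1,0})$ is the whole vertex set here. For $m=2$ your identity does hold, since any other signed point lies in at most two members of $V_{y,1-c}$ (the two orderings).

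\textbf{Why it matters.} To identify the example sets, you need each single $V_{y,c}$ with $(y,c)$ interesting to be of the form $N_\infty(A)$. Incomparability only shows that no nonempty infinite neighborhood lies strictly inside some $V_{y,c}$. Without realizing each $V_{y,c}$ itself, you cannot rule out a minimal nonempty infinite neighborhood that is a union of several $V$'s. Nor can you conclude that each $V_{y,c}$ is an example set.

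\textbf{The repair.} Use the paper's choice of $A$ from Lemma~\ref{lem:abstract-signed-examples}. Fix $h\in H$ with $h(y)=1-c$ and pairwise disjoint $(m-1)$-sets $P_1,P_2,\ldots\subseteq X\setminus\{y\}$. Let $A$ consist of one ordering of each $h|_{\{y\}\cup P_i}$. Then $(y,1-c)$ is the only signed point occurring in infinitely many members of $A$, so $E(A)=\{(y,c)\}$ and your union formula gives $N_\infty(A)=V_{y,c}$. With this substitution the rest of your proposal goes through and is essentially the paper's proof.
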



We begin by proving Theorem \ref{thm:reconstruction-df} and then show how Theorem \ref{thm:reconstruction-main} follows by a minor modification to the proof. The implication $(ii)\Rightarrow(i)$ is straightforward: at each level, a signed relabeling preserves contradiction. 

To prove the converse, we demonstrate a graph-theoretic property which captures, for each interesting signed example, the set of datasets containing it. Once these sets are identified, the vertex labels of the entire graph can easily be reproduced by propagating the known labels throughout the graph.

Let $H$ be a concept class over a domain $X$. Unless stated otherwise, we assume that $X$ is infinite, and let $m\geq 2$. For a labeled example $e$, define,
\[
\mathcal V_e=\{S\in V(G_m^{\text{df}}(H)):e\in S\}.
\]
$\mathcal V_e$ is the collection of all realizable duplicate-free length-$m$ datasets containing $e$. We observe that each infinite neighborhood (Definition \ref{def:inf_neigh}) is a union of sets of the form $\mathcal{V}_{e}$.

\begin{lemma}\label{lem:infinite-neighborhood-union}
For every $A\subseteq V(G_m^{\df}(H))$,
\[
N_\infty(A)
=
\bigcup_{\substack{e\text{ a signed example}:\\
|\{S\in A:e\in S\}|=\infty}}
\mathcal V_{\overline e}.
\]
\end{lemma}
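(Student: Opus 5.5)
We prove the two inclusions separately. Here $\overline e$ denotes the flipped example: if $e=(x,b)$ then $\overline e=(x,1-b)$. The key finiteness fact is that each vertex $T$ of $G_m^{\df}(H)$ contains exactly $m$ signed examples, and two vertices are adjacent precisely when some $e\in T$ has $\overline e\in S$.

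\emph{The inclusion $\supseteq$.} Let $e$ be a signed example such that $A_e:=\{S\in A:e\in S\}$ is infinite, and let $T\in\mathcal V_{\overline e}$. Every $S\in A_e$ contains $e$ while $T$ contains $\overline e$, so $S$ and $T$ contradict at the point of $e$. Moreover $S\neq T$, since no realizable sequence contains both $e$ and $\overline e$. Hence $A_e\subseteq N_G(T)\cap A$, this intersection is infinite, and so $T\in N_\infty(A)$.

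\emph{The inclusion $\subseteq$.} Let $T\in N_\infty(A)$. Every $S\in N_G(T)\cap A$ contradicts $T$ at some point, so some $e'\in T$ has $\overline{e'}\in S$. Therefore
\[
N_G(T)\cap A\subseteq\bigcup_{e'\in T}\{S\in A:\overline{e'}\in S\}.
\]
This is a union of at most $m$ sets, and the left-hand side is infinite. By the pigeonhole principle, some $e'\in T$ has $\{S\in A:\overline{e'}\in S\}$ infinite. Set $e:=\overline{e'}$. Then $|\{S\in A:e\in S\}|=\infty$ and $T\in\mathcal V_{e'}=\mathcal V_{\overline e}$, so $T$ lies in the right-hand side.

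The only step needing care is the pigeonhole in the second inclusion. It relies on each vertex carrying just finitely many (namely $m$) signed examples, which holds by definition of a length-$m$ sequence. Neither the infinitude of $X$ nor the duplicate-free restriction is actually needed for this lemma. Those assumptions matter only later, when minimal infinite neighborhoods (example sets) are identified with single sets $\mathcal V_e$.
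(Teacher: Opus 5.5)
Your proof is correct and is essentially the paper's argument. The paper writes $N_G(T)\cap A$ as the union over the $m$ examples $e\in T$ of $\{S\in A:\overline e\in S\}$ and uses the same pigeonhole observation; you split this into two inclusions and explicitly check that $S\neq T$, and your remark that neither the infinitude of $X$ nor the duplicate-free restriction is needed here is accurate.
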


\begin{proof}
Fix $T\in V(G_m^{\df}(H))$.  A vertex $S\in A$ is adjacent to $T$ if and only if
$S$ contains the negation of at least one signed example appearing in $T$.
Consequently,
\[
N_G(T)\cap A
=
\bigcup_{e\in T}
\{S\in A:\overline e\in S\}.
\]
Since $T$ contains exactly $m$ signed examples, the RHS is a union of
$m$ sets. It is infinite if and only if at least one of these sets is
infinite. Equivalently,
\(
T\in N_\infty(A)
\)
if and only if $T$ contains $\overline e$ for some signed example $e$
that occurs in infinitely many members of $A$.  This is exactly the
claimed identity.
\end{proof}

We may then isolate the individual sets $\mathcal{V}_e$ as minimal infinite neighborhoods, or example sets (Definition \ref{def:exam_set}).

\begin{lemma}\label{lem:abstract-signed-examples}
The example sets in $G_m^{\mathrm{df}}(H)$ are exactly the sets
\[
\mathcal V_{(x,b)}
\]
for which $x$ is interesting.
\end{lemma}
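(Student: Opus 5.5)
We prove two things: that each $\mathcal V_{(x,b)}$ with $x$ interesting is an example set, and that every example set has this form. Throughout we use Lemma~\ref{lem:infinite-neighborhood-union}, which says every infinite neighborhood is a union of sets $\mathcal V_{\overline e}$ over signed examples $e$ occurring in infinitely many members of $A$.

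\emph{Step 1: realizability of $\mathcal V_{(x,b)}$ as an infinite neighborhood.} Fix an interesting point $x$ and a bit $b$. Pick $h\in H$ with $h(x)=1-b$; such an $h$ exists because $x$ is interesting. Since $X$ is infinite and $m\ge2$, we can form infinitely many pairwise distinct duplicate-free realizable sequences
\[
S_j=\bigl((x,1-b),(z_{j,1},h(z_{j,1})),\ldots,(z_{j,m-1},h(z_{j,m-1}))\bigr),
\]
choosing the blocks $\{z_{j,1},\ldots,z_{j,m-1}\}\subseteq X\setminus\{x\}$ pairwise disjoint across $j$. Let $A=\{S_j\}$. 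In $A$, the signed example $(x,1-b)$ occurs infinitely often. Every other signed example occurs in exactly one $S_j$, so it appears only finitely often. By Lemma~\ref{lem:infinite-neighborhood-union}, $N_\infty(A)=\mathcal V_{(x,b)}$. This set is nonempty: take $h'\in H$ with $h'(x)=b$ and pad with $m-1$ fresh points.

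\emph{Step 2: every nonempty infinite neighborhood contains some $\mathcal V_{(x,b)}$ with $x$ interesting.} Let $N_\infty(A)$ be nonempty. By Lemma~\ref{lem:infinite-neighborhood-union}, it is a union of nonempty sets $\mathcal V_{\overline e}$. Take one such $\mathcal V_{\overline e}\neq\emptyset$ with $\overline e=(x,b)$. Since $e=(x,1-b)$ occurs in some realizable sequence, $x$ takes the label $1-b$ under some $h\in H$. Since $\mathcal V_{\overline e}$ is nonempty, $x$ takes the label $b$ under some other hypothesis. Hence $x$ is interesting. Combined with Step~1, this shows that the minimal nonempty infinite neighborhoods are exactly the minimal elements among the sets $\mathcal V_{(x,b)}$ with $x$ interesting.

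\emph{Step 3: these sets are pairwise incomparable.} This is the main obstacle: we need $\mathcal V_{(x,b)}\not\subseteq\mathcal V_{(x',b')}$ whenever $(x,b)\neq(x',b')$ and $x,x'$ are interesting. We split into two cases.
\begin{itemize}
\item If $x=x'$ and $b'=1-b$, the two sets are disjoint and nonempty, so neither contains the other.
\item If $x\neq x'$, take $h\in H$ with $h(x)=b$. Build a duplicate-free realizable sequence from $(x,b)$ together with $m-1$ points labeled by $h$, chosen from the infinite set $X\setminus\{x,x'\}$ so that $x'$ is avoided. This sequence lies in $\mathcal V_{(x,b)}$ but not in $\mathcal V_{(x',b')}$.
\end{itemize}
The hypotheses that $X$ is infinite and $m\ge2$ are used exactly in the padding arguments of Steps~1 and~3.

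Consequently, each $\mathcal V_{(x,b)}$ with $x$ interesting is minimal among nonempty infinite neighborhoods, because any nonempty infinite neighborhood inside it contains some $\mathcal V_{(x',b')}$, which by incomparability must equal it. Conversely, every example set is a minimal nonempty infinite neighborhood, so by Step~2 it contains, and therefore equals, some $\mathcal V_{(x,b)}$ with $x$ interesting.
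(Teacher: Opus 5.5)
Your proof is correct and follows essentially the same route as the paper: you realize each $\mathcal V_{(x,b)}$ as $N_\infty(A)$ using samples that share $(x,1-b)$ and have pairwise disjoint padding, you use Lemma~\ref{lem:infinite-neighborhood-union} to see that every nonempty infinite neighborhood is a union of such sets, and you conclude by pairwise incomparability. Your Step~2 also makes explicit a point the paper leaves implicit: every nonempty $\mathcal V_{\overline e}$ appearing in that union is indexed by an interesting point, so Step~1 applies to it.
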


\begin{proof}
We first show that every $\mathcal V_{(x,b)}$ with $x$ interesting is an
infinite neighborhood.  Since $x$ is interesting, there is some $h\in H$
such that
\[
h(x)=1-b.
\]
Choose infinitely many pairwise disjoint sets
\[
P_1,P_2,\ldots\subseteq X\setminus\{x\},
\qquad |P_i|=m-1,
\]
and let
\[
S_i=h|_{\{x\}\cup P_i}.
\]
Set $A=\{S_i:i\geq 1\}$.  The signed example $(x,1-b)$ occurs in every member of $A$, while every other point occurs in at most one member. Lemma~\ref{lem:infinite-neighborhood-union} therefore gives
\[
N_\infty(A)=\mathcal V_{(x,b)}.
\]

We next observe that distinct nonempty sets of the form $\mathcal V_e$
are incomparable under inclusion.  Suppose $e=(x,b)$ and $f=(y,c)$ are
distinct and both $\mathcal V_e$ and $\mathcal V_f$ are nonempty.  If
$x=y$, then $b\neq c$, so the two sets are disjoint.  If $x\neq y$, choose
a hypothesis realizing $e$ and restrict it to an $m$-point set containing
$x$ but not $y$.  This gives a vertex in $\mathcal V_e\setminus\mathcal
V_f$.  By symmetry, neither set contains the other.

Now let $N_\infty(A)$ be nonempty.  By
Lemma~\ref{lem:infinite-neighborhood-union}, it is a union of nonempty
sets $\mathcal V_e$.  Each such $\mathcal V_e$ is itself an infinite
neighborhood by the first part of the proof.  Therefore, if
$N_\infty(A)$ is inclusion-minimal and nonempty, it must be equal to a single
$\mathcal V_e$.  Conversely, incomparability shows that no nonempty
infinite neighborhood can be a proper subset of $\mathcal V_e$.
Thus the minimal nonempty infinite neighborhoods are precisely the
claimed sets.
\end{proof}


It is then also straightforward to detect pairs of vertex sets containing oppositely labeled examples.

\begin{definition}
Two distinct example sets $\mathcal V_0$ and $\mathcal V_1$ are \emph{complementary} if every vertex in $\mathcal V_0$ is adjacent to every vertex in $\mathcal V_1$.
\end{definition}

\begin{lemma}\label{lem:complementary-abstract-examples}
Two example sets are complementary if and only if they are
of the form
\[
\mathcal V_{(x,0)}
\qquad\text{and}\qquad
\mathcal V_{(x,1)}
\]
for some interesting point $x$.
\end{lemma}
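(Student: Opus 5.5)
By Lemma~\ref{lem:abstract-signed-examples}, every example set has the form $\mathcal V_{(x,b)}$ with $x$ interesting. So it suffices to show that, for two distinct example sets $\mathcal V_e$ and $\mathcal V_f$ with $e=(x,b)$ and $f=(y,c)$, every vertex of $\mathcal V_e$ is adjacent to every vertex of $\mathcal V_f$ if and only if $y=x$ and $c=1-b$. Throughout we keep the standing assumptions that $X$ is infinite and $m\ge2$.

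For the ``if'' direction, let $x$ be interesting, $S\in\mathcal V_{(x,0)}$ and $T\in\mathcal V_{(x,1)}$. Then $S$ contains $(x,0)$ and $T$ contains $(x,1)$, so they contradict at $x$ and are adjacent. They are also distinct, since no realizable sequence contains both labels of $x$. Both sets are nonempty because $x$ is interesting: restrict a hypothesis with the appropriate value at $x$ to any $m$-point set containing $x$. Hence they are two distinct example sets, and they are complementary.

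For the ``only if'' direction, suppose $\mathcal V_e$ and $\mathcal V_f$ are distinct example sets but not of this form. Since the sets are distinct, $e\neq f$, and if $x=y$ we would have $c=1-b$. So $x\ne y$, and we must exhibit $S\in\mathcal V_e$ and $T\in\mathcal V_f$ that are nonadjacent (and distinct). Pick $h\in H$ with $h(x)=b$ and $g\in H$ with $g(y)=c$; these exist because the example sets are nonempty. Since $X$ is infinite, choose disjoint sets $P,P'\subseteq X\setminus\{x,y\}$ with $|P|=|P'|=m-1$. Let $S=h|_{\{x\}\cup P}$ and $T=g|_{\{y\}\cup P'}$, viewed as duplicate-free sequences in some fixed order. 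Then $\supp(S)\cap\supp(T)=\emptyset$, so $S$ and $T$ share no domain point, cannot contradict, and are nonadjacent. They are distinct because their supports differ. This contradicts complementarity.

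The main obstacle is only bookkeeping: making sure the witnesses lie in the duplicate-free graph with exactly $m$ distinct points, and that disjoint padding is available. Both follow from $X$ being infinite. For the later finite-domain variant (Theorem~\ref{thm:reconstruction-main}), I would instead take $T$ to be the probe $((y,c),\ldots,(y,c))$ and $S=((x,b),\ldots,(x,b))$, which are nonadjacent whenever $x\neq y$.
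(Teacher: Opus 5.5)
Your proof is correct and follows essentially the same route as the paper. The forward direction is the contradiction at $x$. For the converse, you use the infinitude of $X$ to pick vertices of $\mathcal V_{(x,b)}$ and $\mathcal V_{(y,c)}$ with disjoint supports when $x\neq y$, which forces $x=y$ and opposite labels; you only spell out the padding sets $P,P'$ that the paper leaves implicit.
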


\begin{proof}
Every vertex containing $(x,0)$ contradicts every vertex containing
$(x,1)$, so these two sets are complementary.

Conversely, suppose $\mathcal V_{(x,b)}$ and $\mathcal V_{(y,c)}$ are
complementary.  If $x\neq y$, use the infinitude of $X$ to choose a
realizable vertex from each set with disjoint supports.  These two
vertices are nonadjacent, a contradiction.  Hence $x=y$.  Since the two
example sets are distinct, their labels must be opposite.
\end{proof}


Next, we show how to recover all labels in the duplicate-free contradiction graph $G_m^{\df}(H)$ up to a signed relabeling. Once we have identified, for each signed, interesting example, the set of vertices containing that example, the full vertex labels follow easily by adjacency to an example labeled oppositely.


\begin{lemma}\label{lem:fixed-level-reconstruction}
Let $H\subseteq\bits^X$ and $K\subseteq\bits^Y$ be nonempty concept classes with $X,Y$ infinite, let $m\geq2$, and let
\[
f:G^{\text{df}}_m(H)\longrightarrow G^\text{df}_m(K)
\]
be a graph isomorphism. Then there is a bijection
\[
\pi_f:\Int(H)\to\Int(K)
\]
and a map $\epsilon_f:\Int(H)\to\bits$ such that, for every partial assignment $p:D\to\bits$ with $D\subseteq\Int(H)$ and $|D|\leq m$,
\[
p\text{ is realized by }H
\quad\Longleftrightarrow\quad
\Theta_{\pi_f,\epsilon_f}(p)\text{ is realized by }K.
\]
\end{lemma}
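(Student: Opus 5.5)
The isomorphism $f$ will transport the example-set structure from $G^{\df}_m(H)$ to $G^{\df}_m(K)$. Infinite neighborhoods, inclusion-minimality and complementarity are all defined purely from adjacency, so $f$ maps example sets to example sets and complementary pairs to complementary pairs. That is, $f(N_\infty(A))=N_\infty(f(A))$ for every $A$.

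By Lemma~\ref{lem:abstract-signed-examples}, the example sets of $G^{\df}_m(H)$ are exactly the $\mathcal V_{(x,b)}$ with $x\in\Int(H)$, and likewise for $K$. The sets $\mathcal V_{(x,b)}$ are nonempty and pairwise distinct by the incomparability argument in that proof. Hence $f$ induces a bijection $\Phi$ from the signed interesting examples of $H$ to those of $K$, defined by
\[
f(\mathcal V_{(x,b)})=\mathcal V_{\Phi(x,b)} .
\]
By Lemma~\ref{lem:complementary-abstract-examples}, $\Phi$ sends the pair $\{(x,0),(x,1)\}$ to a pair $\{(y,0),(y,1)\}$. So we may define $\pi_f(x)=y$, and $\epsilon_f(x)$ by the rule $\Phi(x,0)=(y,\epsilon_f(x))$. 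Then $\Phi(x,b)=(\pi_f(x),b\oplus\epsilon_f(x))$. The map $\pi_f$ is a bijection $\Int(H)\to\Int(K)$ because the same construction applied to $f^{-1}$ yields its inverse.

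Next we relate vertices. A vertex $S$ lies in $\mathcal V_e$ exactly when $e\in S$, and $f$ preserves membership in the transported sets. Hence the interesting signed examples of $f(S)$ are exactly $\Phi$ applied to the interesting signed examples of $S$. In other words, the restriction of $f(S)$ to $\Int(K)$ is $\Theta_{\pi_f,\epsilon_f}$ of the restriction of $S$ to $\Int(H)$.

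Now take a partial assignment $p:D\to\bits$ with $D\subseteq\Int(H)$ and $|D|\le m$, and suppose $p$ is realized by $h\in H$. Because $X$ is infinite, we can extend $D$ by $m-|D|$ fresh distinct points and restrict $h$ there. This gives a duplicate-free vertex $S$ whose interesting part contains $p$.

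One might try to force the extra points to be boring, to keep the interesting part exactly $p$. That is not always possible, since $\Boring(H)$ may be finite. It is also unnecessary. It suffices that $p$ is a sub-assignment of the interesting part of $S$. Then $\Theta(p)$ is a sub-assignment of the interesting part of $f(S)$, which is realized by the hypothesis in $K$ realizing $f(S)$. So $\Theta(p)$ is realized by $K$.

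The converse direction applies the same argument to $f^{-1}$, using that its induced $\Phi$ is $\Phi^{-1}$.

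The step needing the most care is the transport itself: checking that $f(N_\infty(A))=N_\infty(f(A))$, and that distinct signed examples give distinct example sets, so that $\Phi$ is well defined and bijective. Both follow from Lemma~\ref{lem:abstract-signed-examples} and its incomparability argument. Note also that the duplicate-free hypothesis requires $m$ distinct points. This is exactly where the infinitude of $X$ and $Y$ is used, both in the padding step and in the two lemmas invoked.
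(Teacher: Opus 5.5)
Your proposal is correct and follows essentially the same route as the paper. You transport example sets and their complementary pairing through $f$ using Lemmas~\ref{lem:abstract-signed-examples} and~\ref{lem:complementary-abstract-examples}, show that $f$ carries the interesting part of each vertex to the signed relabeling of that part, and finish with the same padding argument (arbitrary fresh points, not necessarily boring), using $f^{-1}$ for the converse.
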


\begin{proof}
Consider an interesting labeled example $e=(x,b)$. Since example sets are purely graph-theoretic objects, the isomorphism $f$ preserves them. Therefore, by Lemma \ref{lem:abstract-signed-examples}, we have
\[
f(\mathcal V_e)=\mathcal V_{e'}
\]
for some signed example $e'=(x',b')\in Y\times\bits$. Now, by Lemma \ref{lem:complementary-abstract-examples}, we also observe that $f(\mathcal{V}_{\bar e}) = \mathcal V_{\overline{e'}}$. In particular, $e'$ is also an interesting example of $K$. 


Thus, the isomorphism $f$ induces a bijection $\theta_f$ between the signed interesting examples of $H$ and those of $K$, satisfying
\[
\theta_f(\bar e)=\overline{\theta_f(e)}.
\]
Equivalently, there is a bijection $\pi_f:\Int(H)\to\Int(K)$ and a map $\epsilon_f:\Int(H)\to\bits$ such that
\[
\theta_f(x,b)=\bigl(\pi_f(x),b\oplus\epsilon_f(x)\bigr).
\]

For a vertex $S\in V(G_m^{\mathrm{df}}(H))$, let
\[
P_H(S)=\{(x,b)\in S:x\in\Int(H)\}.
\]
For every interesting signed example $e$, we have
\[
e\in P_H(S)
\quad\Longleftrightarrow\quad
S\in\mathcal V_e
\quad\Longleftrightarrow\quad
f(S)\in\mathcal V_{\theta_f(e)}
\quad\Longleftrightarrow\quad
\theta_f(e)\in P_K(f(S)).
\]
Consequently,
\[
\theta_f(P_H(S))=P_K(f(S))
\qquad\text{for every }S\in V(G^{\text{df}}_m(H)).
\]

Finally, let $p:D\to\bits$ be a partial assignment with $D\subseteq\Int(H)$ and $|D|\leq m$. Suppose that $p$ is realized by some $h\in H$. Since $X$ is infinite, choose a set $P\subseteq X\setminus D$ of size $m-|D|$, and let $S$ be any ordering of the labeled set $h|_{D\cup P}$. Then $S\in V(G_m^{\mathrm{df}}(H))$ and
\[
\{(x,p(x)):x\in D\}\subseteq P_H(S).
\]
It follows from the preceding identity that $P_K(f(S))$ contains the signed examples of $\Theta_{\pi_f,\epsilon_f}(p)$. Since $f(S)$ is $K$-realizable, so is $\Theta_{\pi_f,\epsilon_f}(p)$. The reverse implication follows by applying the same argument to $f^{-1}$, which induces the inverse signed bijection. This also covers the empty assignment.
\end{proof}

Next, we prove the main theorem.

\begin{proof}[Proof of Theorem~\ref{thm:reconstruction-df}]
Assume first that $H\equiv_{\mathrm{le}}K$. For each $m$, applying the relabeling witnessing $H\equiv_{\mathrm{le}}K$ gives a bijection between the vertex sets of $G_m^\text{df}(H)$ and $G_m^\text{df}(K)$. Since signed relabelings preserve contradiction, this bijection is a graph isomorphism.

Conversely, suppose $G^\text{df}_m(H)\cong G^\text{df}_m(K)$ for every $m\geq1$. The graph $G^\text{df}_1(H)$ is the disjoint union of one edge for every interesting point and one isolated vertex for every boring point. Hence $G^\text{df}_1(H)\cong G^\text{df}_1(K)$ implies
\[
|\Int(H)|=|\Int(K)|
\qquad\text{and}\qquad
|\Boring(H)|=|\Boring(K)|.
\]
Fix a bijection $\beta:\Boring(H)\to\Boring(K)$. For $x\in\Boring(H)$ and $y\in\Boring(K)$, let $b_H(x)$ and $b_K(y)$ denote their unique realizable labels, and define
\[
\epsilon_B(x)=b_H(x)\oplus b_K(\beta(x)).
\]

For $m=1$, an isomorphism $G_1^{\mathrm{df}}(H)\cong G_1^{\mathrm{df}}(K)$ maps edge components to edge components and therefore induces a signed bijection $(\pi_1,\epsilon_1)$ from $\Int(H)$ to $\Int(K)$. Extend it to the boring points using $\beta$ and $\epsilon_B$. The resulting signed relabeling preserves the realizability of every length-$1$ sequence.

Fix $m\geq2$ and an isomorphism $f_m:G^\text{df}_m(H)\to G^\text{df}_m(K)$. By Lemma~\ref{lem:fixed-level-reconstruction}, it induces a signed bijection $(\pi_m,\epsilon_m)$ from $\Int(H)$ to $\Int(K)$ preserving every realizable partial assignment on at most $m$ interesting points. Extend these maps to all of $X$ by setting
\[
\pi_m|_{\Boring(H)}=\beta,
\qquad
\epsilon_m|_{\Boring(H)}=\epsilon_B.
\]
A labeled sequence is realizable precisely when repeated occurrences of each point carry a consistent label, every boring point receives its unique realizable label, and the induced partial assignment on its interesting support is realizable. The signed relabeling preserves the first two conditions, while Lemma~\ref{lem:fixed-level-reconstruction} preserves the third. Therefore, for every length-$m$ sequence $S$ over $X$,
\[
S\text{ is $H$-realizable}
\quad\Longleftrightarrow\quad
\Theta_{\pi_m,\epsilon_m}(S)\text{ is $K$-realizable}.
\]
Together with the $m=1$ case above, this shows that $H$ and $K$ are learning-equivalent.
\end{proof}

\medskip

The only remaining question is whether learning-equivalence is preserved by duplicate-free graphs in the finite domain setting. As the following example shows, there is no hope for a finite-case theorem like the one above.

\begin{example}\label{ex:df-not-reconstructive}
Let the domain be $\{1,2,3\}$, and define
\[
H=\bits^3\setminus\{101,110\},
\qquad
K=\bits^3\setminus\{000,111\}.
\]
The classes $H$ and $K$ are not learning-equivalent. To see this, at level $m=3$, a signed relabeling preserving realizability would carry the six full labelings in $H$ onto the six full labelings in $K$. Signed relabelings of the boolean cube are exactly the isometries generated by coordinate permutations and coordinate flips, and hence preserve the Hamming distance between the two omitted concepts. The omitted concepts of $H$ have distance $2$, while the omitted concepts of $K$ have distance $3$.

Nevertheless,
\[
G_m^{\df}(H)\cong G_m^{\df}(K)
\qquad\text{for every }m\ge1.
\]
For $m=1$ and $m=2$, every duplicate-free labeled sample of size $m$ is realizable in both classes, so the graphs are identical. For $m=3$, each realized full labeling contributes the $3!$ orderings of that labeling. These orderings form an independent set, and every two orderings of distinct full labelings are adjacent. Hence both graphs are the complete six-partite graph with parts of size $3!$. For $m>3$, both graphs are empty.
\end{example}


Now that we have derived all the results for the duplicate-free contradiction graph, obtaining an analogous result for the ordinary contradiction graph follows nearly identically. However, in the contradiction graph with repetitions, minimal vertices identify the example sets without requiring an infinite domain. We provide a proof for completeness.

\begin{proof}[Proof of Theorem \ref{thm:reconstruction-main}]
The proof is identical to the proof of Theorem \ref{thm:reconstruction-df}, with two minor modifications.

First, the assumption that $X$ and $Y$ are infinite is unnecessary. Fix $m\geq2$, and for an interesting signed example $e$ put
\[
\mathcal V_e=\{S\in V(G_m(H)):e\in S\}.
\]
By Lemma~\ref{lem:minimal-one-moving} and the construction immediately following it, the equivalence classes of minimal vertices under equality of neighborhoods are precisely the classes $\mathcal A_e$ indexed by interesting signed examples. Moreover, two such classes are completely adjacent exactly when they are $\mathcal A_e$ and $\mathcal A_{\bar e}$. These two classes are completely adjacent, while if $f\ne\bar e$, the vertices $e^m\in\mathcal A_e$ and $f^m\in\mathcal A_f$ are nonadjacent. Also,
\[
\mathcal V_e=\{S\in V(G_m(H)):S\text{ is adjacent to every }U\in\mathcal A_{\bar e}\}.
\]
Thus the sets $\mathcal V_e$ and their complementary pairing are determined by the graph, so the argument of Lemma~\ref{lem:fixed-level-reconstruction} applies. In its final padding step, a nonempty realized partial assignment is extended to a length-$m$ vertex by repeating any one of its signed examples; the empty assignment is realized automatically. For $m=1$, the edge-component argument used in the proof of Theorem~\ref{thm:reconstruction-df} applies unchanged.

Second, in extending the signed bijection from interesting points to the entire domain, repeated examples cause no additional difficulty, that is, the signed relabeling acts on each occurrence of a point in the same way, so repeated occurrences are automatically assigned consistently. Thus, the argument following Lemma \ref{lem:fixed-level-reconstruction} in the proof of Theorem \ref{thm:reconstruction-df} applies verbatim. In particular, an isomorphism
\[
G_m(H)\cong G_m(K)
\]
induces a signed bijection of the interesting points that preserves every realizable partial assignment on at most $m$ interesting points. The bijection between boring points is obtained from the isomorphism of $G_1(H)$ and $G_1(K)$ exactly as in Theorem \ref{thm:reconstruction-df}.

Hence, for every $m\geq1$, the resulting signed relabeling preserves the realizability of every length-$m$ labeled sequence, and therefore $H\equiv_{\mathrm{le}}K$. The reverse implication is immediate, since signed relabelings preserve contradiction. This proves the theorem.
\end{proof}

Lastly, we take a short detour to explain how one could obtain the vertex sets $\mathcal{V}_e$ in $G_m(H)$ through a much simpler, though less general, construction. This relies on the fact that, in the contradiction graph with repetitions, we can identify the points $((x,b),\dots,(x,b))$, as well as any other points with only one interesting example based on the notion of minimality described below.

\begin{definition}\label{def:minimal}
Let $G=(V,E)$ be a graph, and let $\Nbr(v)$ denote the open neighborhood of $v$. A vertex $v\in V$ is \emph{minimal} if $\Nbr(v)\ne\emptyset$ and there is no $u\in V$ such that
\[
\emptyset\subsetneq \Nbr(u)\subsetneq \Nbr(v).
\]
\end{definition}

The following lemma establishes the desired characterization; its proof is given in Appendix \ref{apx:proofs}.

\begin{lemma}\label{lem:minimal-one-moving}
Let $H\subseteq\bits^X$ be nonempty, and let $S\in V(G_m(H))$. Then $S$ is minimal if and only if
\[
|\supp(S)\cap\Int(H)|=1.
\]
\end{lemma}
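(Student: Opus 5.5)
The plan is to compute neighborhoods explicitly. For a vertex $S$, let $I(S)$ be the set of signed examples $(x,b)\in S$ with $x\in\Int(H)$. A vertex $T$ is adjacent to $S$ exactly when $T$ contains $\overline e$ for some $e\in I(S)$. Boring points never create contradictions, because no realizable sequence carries the opposite label. Hence
\[
\Nbr(S)=\bigcup_{e\in I(S)}\mathcal V_{\overline e}\setminus\{S\},
\]
where $\mathcal V_{\overline e}$ denotes the realizable length-$m$ sequences containing $\overline e$. Since $S$ never contains $\overline e$ for $e\in I(S)$, the removal of $S$ is vacuous, so $\Nbr(S)=\bigcup_{e\in I(S)}\mathcal V_{\overline e}$. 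In particular, $\Nbr(S)\neq\emptyset$ if and only if $I(S)\neq\emptyset$. Indeed, for interesting $x$ with $(x,b)\in S$, the repeated sequence $((x,1-b),\ldots,(x,1-b))$ is realizable and lies in $\mathcal V_{(x,1-b)}$. So the vertices with nonempty neighborhood are exactly those with $|\supp(S)\cap\Int(H)|\ge1$, and it remains to separate the case of exactly one interesting point from the case of two or more.

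For the forward direction ($|\supp(S)\cap\Int(H)|=1$ implies minimal), we have $\Nbr(S)=\mathcal V_{\overline e}$ for the single interesting example $e\in S$. Suppose $\emptyset\ne\Nbr(u)\subseteq\mathcal V_{\overline e}$. Then $u$ has some interesting example $f$, and $\mathcal V_{\overline f}\subseteq\mathcal V_{\overline e}$. The vertex $\overline f^{\,m}=(\overline f,\ldots,\overline f)$ is realizable because $f$'s point is interesting, and it lies in $\mathcal V_{\overline f}$. It therefore contains $\overline e$, which forces $f=e$. This holds for every interesting example of $u$, so $\Nbr(u)=\mathcal V_{\overline e}$, and no strict inclusion is possible.

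For the converse, suppose $S$ has two interesting examples $e=(x,b)$ and $e'=(x',b')$ with $x\ne x'$. Take $u=e^m$, so that $\Nbr(u)=\mathcal V_{\overline e}\subseteq\Nbr(S)$, which is nonempty. The inclusion is strict because $\overline{e'}^{\,m}\in\mathcal V_{\overline{e'}}\subseteq\Nbr(S)$ but does not contain $\overline e$, since its support is $\{x'\}\ne\{x\}$. Hence $S$ is not minimal.

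The main point needing care is the repeated-vertex probes $e^m$: they are exactly what make the inclusions $\mathcal V_{\overline f}\subseteq\mathcal V_{\overline e}$ force $f=e$, with no infinite-domain assumption. I would also check explicitly the edge case in which $S$ lists the same interesting point several times. This case is harmless, since $I(S)$ is a set and realizability forces a single label per point.
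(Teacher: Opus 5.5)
Your proof is correct and takes essentially the same route as the paper's: both directions hinge on the repeated probes $\overline{f}^{\,m}$ and $e^m$. Your explicit formula $\Nbr(S)=\bigcup_{e\in I(S)}\mathcal V_{\overline e}$ simply makes systematic the adjacency reasoning the paper does inline, including the case with no interesting point, where $\Nbr(S)=\emptyset$.
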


With this result, one can easily recover the sets $\mathcal V_e$ in the following manner. For an interesting signed example $e=(x,b)$, let $\mathcal A_e$ be the set of minimal vertices whose unique interesting signed example is $e$. This set is nonempty, since it contains $e^m$. By Lemma~\ref{lem:minimal-one-moving}, every minimal vertex belongs to exactly one such set, and every $U\in\mathcal A_e$ satisfies
\[
\Nbr(U)=\Nbr(e^m).
\]
Moreover, if $e\neq e_0$, then
\[
\bar e^m\in\Nbr(e^m)\setminus\Nbr(e_0^m).
\]
Consequently, the sets $\mathcal A_e$ are precisely the equivalence classes of minimal vertices under equality of neighborhoods. In particular, the sets $\mathcal A_e$ are definable in purely graph-theoretic terms. Now, the sets $\mathcal V_e$ are exactly the sets of vertices $S$ that are connected to every element of $\mathcal A_{\bar e}$. Accordingly, the sets $\mathcal V_e$ are the commensurate \emph{example sets} of $G_m(H)$.

\subsection{Characterizing Littlestone dimension}
\label{sec:littlestone}

Alon et al. \cite{AlonMoranScheflerYehudayoff2024} define the \textit{clique dimension} of $(G_m(H))_{m \geq 1}$, denoted by $\Cdim(H)$, to be the largest $m$ such that $G_m(H)$ has a clique of size $2^m$. They proved that $\Cdim(H)$ is closely related to $\ell = \Ldim(H)$, namely,

$$\ell \leq \Cdim(H) \leq O(\ell \log{\ell}).$$

Theorem \ref{thm:reconstruction-main} implies a stronger graph-theoretic characterization of $\Ldim(H)$ which captures its exact value. We explicitly provide such a characterization in the standard contradiction graphs, but note that the characterization only relies on example sets $\mathcal{V}_e$ which are detectable by duplicate-free graphs in infinite domains, therefore it holds in this alternate setting as well. First, we recall the standard definition of Littlestone dimension.

\begin{definition}[Littlestone dimension~\cite{Littlestone1988}]\label{def:littlestone-dimension}
A \emph{mistake tree of depth $d$} is a complete rooted binary tree of depth $d$ whose internal nodes are labeled by points of $X$ and whose two outgoing edges at every internal node are labeled $0$ and $1$, one of each.  A root-to-leaf path determines a length-$d$ labeled sequence by recording, at each internal node, the point labeling that node together with the label of the edge taken.  The tree is \emph{shattered by $H$} if every root-to-leaf sequence is $H$-realizable.  The \emph{Littlestone dimension} of $H$ is
\[
\Ldim(H)
=
\sup\bigl\{d\in\mathbb N:H\text{ shatters a mistake tree of depth }d\bigr\}.
\]
We use the convention that the supremum of the empty set is $0$, and that the supremum is $\infty$ if $H$ shatters mistake trees of arbitrarily large finite depth.
\end{definition}

We now present the precise graph theoretic structure encoding the Littlestone dimension.

\begin{definition}\label{def:abstract-mistake-tree}
An \emph{abstract mistake tree of depth $d$} in
$G_d(H)$ consists of an ordered complementary pair
\[
(\mathcal V_s^0,\mathcal V_s^1)
\]
of example sets for every $s\in\bits^{<d}$, together with
vertices
\[
(q_\eta)_{\eta\in\bits^d}\subseteq V(G_d(H)),
\]
such that, for every $s\in\bits^{<d}$, $b\in\bits$, and
$\eta\in\bits^d$,
\[
sb\preceq\eta
\quad\Longrightarrow\quad
q_\eta\in\mathcal V_s^b.
\]
Equivalently, for every $\eta\in\bits^d$, the example sets
selected along the path to $\eta$ have nonempty intersection.
\end{definition}

\begin{theorem}\label{thm:df-abstract-tree}
Let $d\geq 2$.  Then
\[
\Ldim(H)\geq d
\quad\Longleftrightarrow\quad
G_d(H)
\text{ contains an abstract mistake tree of depth }d.
\]
\end{theorem}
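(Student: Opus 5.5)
The plan is to prove both directions by translating between mistake trees and abstract mistake trees. The dictionary is the identification of example sets in $G_d(H)$ with the sets $\mathcal V_{(x,b)}$ for interesting $x$, together with their complementary pairing $\mathcal V_{(x,0)},\mathcal V_{(x,1)}$. These identifications come from Lemma~\ref{lem:minimal-one-moving} and the construction following it, as in the proof of Theorem~\ref{thm:reconstruction-main}. Because $d\ge2$, that machinery applies in $G_d(H)$ without any infinite-domain assumption.

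\textbf{Forward direction.} Let $\mathcal T$ be a mistake tree of depth $d$ shattered by $H$, and let $x_s$ be the point labeling the internal node indexed by $s\in\bits^{<d}$.
\begin{enumerate}[label=(\alph*)]
\item First I make every node point interesting. At node $s$ both subtrees are nonempty, and each contains a leaf whose path sequence is realizable. One of these sequences contains $(x_s,0)$ and the other contains $(x_s,1)$, so both labels of $x_s$ are realized by $H$ and $x_s$ is interesting.
\item I set $(\mathcal V_s^0,\mathcal V_s^1)=(\mathcal V_{(x_s,0)},\mathcal V_{(x_s,1)})$. This is an ordered complementary pair of example sets.
\item For a leaf $\eta$, I let $q_\eta$ be the path sequence of $\eta$, which has length $d$. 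It is realizable by shattering, so it is a vertex of $G_d(H)$.
\item If $sb\preceq\eta$, then $q_\eta$ contains $(x_s,b)$, hence $q_\eta\in\mathcal V_s^b$.
\end{enumerate}
Repeated points along a path do no harm. Realizability forces repeated occurrences of a point to carry the same label, and repetitions are allowed in $G_d(H)$.

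\textbf{Reverse direction.} Suppose an abstract mistake tree is given. By the characterization recalled above, each ordered complementary pair has the form $(\mathcal V_{(x_s,c_s)},\mathcal V_{(x_s,1-c_s)})$ for some interesting $x_s$ and some bit $c_s$.
\begin{enumerate}[label=(\alph*)]
\item I build a mistake tree by labeling node $s$ with $x_s$. I flip the edge labels at that node so that edge $b$ carries label $b\oplus c_s$. This is just a renaming of the two outgoing edges and still gives a valid mistake tree.
\item For a leaf $\eta$, the path sequence is $\bigl((x_s,\eta_{|s|}\oplus c_s)\bigr)_{s\prec\eta}$.
\item The vertex $q_\eta$ lies in every $\mathcal V_s^{\eta_{|s|}}$ along the path, so it contains every signed example $(x_s,\eta_{|s|}\oplus c_s)$. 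Any $h\in H$ realizing $q_\eta$ therefore realizes the path sequence of $\eta$.
\end{enumerate}
Hence every root-to-leaf sequence is $H$-realizable, the tree is shattered, and $\Ldim(H)\ge d$.

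\textbf{Main obstacle.} The step needing care is justifying that the example sets and complementary pairs in $G_d(H)$ (with repetitions) are exactly $\mathcal V_{(x,b)}$ with $x$ interesting and paired by opposite labels. In particular this must hold without an infinite domain. Here I rely on Lemma~\ref{lem:minimal-one-moving} and the equivalence classes $\mathcal A_e$, with
\[
\mathcal V_e=\{S:S\text{ is adjacent to all of }\mathcal A_{\bar e}\}.
\]
This description should be taken as the definition of example sets in $G_d(H)$, as done in the proof of Theorem~\ref{thm:reconstruction-main}.

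A smaller point is that $q_\eta$ has exactly $d$ entries while it must contain $d$ signed examples. This causes no trouble. Membership only requires containment, and realizability of $q_\eta$ already guarantees consistency when some $x_s$ coincide.
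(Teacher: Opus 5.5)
Your proposal is correct and follows the paper's proof essentially step for step. Both directions rest on identifying ordered complementary pairs of example sets in $G_d(H)$ with $(\mathcal V_{(x,c)},\mathcal V_{(x,1-c)})$ for interesting $x$, via the classes $\mathcal A_e$ of minimal vertices. The only minor difference is in how repeated points on a path are handled. The paper shows that no point can repeat along a root-to-leaf path of a shattered tree, so the $q_\eta$ are duplicate-free, which is what lets the characterization transfer to $G_d^{\df}(H)$. You simply note that repetitions are harmless in $G_d(H)$, which suffices for the statement as posed.
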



The proof is straightforward and is thus deferred to Appendix \ref{apx:proofs}. With this, we get the desired level-by-level characterization.

\begin{corollary}\label{cor:ldim-level-certificate}
For every hypothesis class $H$ and every $d\ge2$, the single graph $G_d(H)$ determines whether $\Ldim(H)\ge d$.
\end{corollary}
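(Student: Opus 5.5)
The corollary should follow from Theorem~\ref{thm:df-abstract-tree} in the same way Theorem~\ref{thm:main} followed from Corollary~\ref{cor:level-m}. The plan is to show that the property ``$G_d(H)$ contains an abstract mistake tree of depth $d$'' is invariant under graph isomorphism. Then, for $G_d(H)\cong G_d(K)$, we get $\Ldim(H)\ge d$ if and only if $\Ldim(K)\ge d$.

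The key step is to check that every ingredient of Definition~\ref{def:abstract-mistake-tree} is defined purely graph-theoretically inside $G_d(H)$. For the example sets in the contradiction graph with repetitions, I would use the description following Lemma~\ref{lem:minimal-one-moving}, rather than the infinite-neighborhood definition, which needs an infinite domain:
\begin{itemize}
\item minimal vertices are defined by neighborhood inclusion;
\item the classes $\mathcal A_e$ are the equivalence classes of minimal vertices under equality of neighborhoods;
\item the sets $\mathcal V_e$ are the sets of vertices adjacent to every member of a class $\mathcal A_{\bar e}$.
\end{itemize}
Complementarity of two such sets is again a pure adjacency condition, namely complete adjacency between the two sets.

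Now let $f:G_d(H)\to G_d(K)$ be an isomorphism. Since $f$ preserves neighborhoods and neighborhood inclusions, it maps minimal vertices to minimal vertices. It also maps classes $\mathcal A_e$ to classes $\mathcal A_{e'}$, and hence example sets to example sets. Complementary pairs map to complementary pairs, because complete adjacency is preserved.

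Given an abstract mistake tree $\bigl((\mathcal V_s^0,\mathcal V_s^1)_{s\in\bits^{<d}},(q_\eta)_{\eta\in\bits^d}\bigr)$ in $G_d(H)$, its image $\bigl((f(\mathcal V_s^0),f(\mathcal V_s^1))_s,(f(q_\eta))_\eta\bigr)$ is therefore an abstract mistake tree in $G_d(K)$. The membership relations $q_\eta\in\mathcal V_s^b$ transfer to $f(q_\eta)\in f(\mathcal V_s^b)$ because $f$ is a bijection. Applying Theorem~\ref{thm:df-abstract-tree} to both $H$ and $K$ then finishes the argument, and the reverse direction follows by applying the same reasoning to $f^{-1}$.

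The only delicate point is making sure that ``example set'' in Definition~\ref{def:abstract-mistake-tree} is read through its intrinsic graph-theoretic characterization in $G_d(H)$, and not through any reference to labels. Once that is fixed, as the construction after Lemma~\ref{lem:minimal-one-moving} provides, the transfer argument is routine.
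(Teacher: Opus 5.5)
Your proposal is correct and follows the paper's own argument. Theorem~\ref{thm:df-abstract-tree} turns $\Ldim(H)\ge d$ into the existence of an abstract mistake tree in $G_d(H)$, and that configuration is preserved by graph isomorphisms because minimal vertices, the classes $\mathcal A_e$, the example sets $\mathcal V_e$, and complementarity are all defined by adjacency alone; you simply spell out this invariance check (via the construction after Lemma~\ref{lem:minimal-one-moving}), which the paper states in one line.
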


\begin{proof}
By Theorem~\ref{thm:df-abstract-tree}, the predicate $\Ldim(H)\ge d$ is equivalent to the existence in $G_d(H)$ of the graph-theoretic configuration from Definition~\ref{def:abstract-mistake-tree}.  This property is preserved by graph isomorphism.
\end{proof}

For $d=1$, the corresponding statement is simply that
$\Ldim(H)\geq 1$ if and only if
$G_1(H)$ contains an edge.

\section{Discussion}\label{sec:discussion}

In this paper, we proved that the realizable samples at every fixed sample length are determined by the sequence $(G_m(H))_{m\geq 1}$ of contradiction graphs. Consequently, every learning-theoretic property of this structure that is invariant under signed relabeling is also determined by the contradiction-graph sequence. 

It is natural to ask whether learning-equivalence implies that two concept classes are isomorphic. That is, can the dependence on $m$ of the permutations $\pi_m$ and maps $\epsilon_m$ be removed? In fact, the dependence on $m$ is necessary.

To illustrate this, let $E_n$ and $F_n$, with $n\geq2$, be finite sets such that $|E_n|=|F_n|=n$, and assume that all the sets $E_n$ and $F_n$ are pairwise disjoint. Furthermore, define the domains 

$$
X=\bigcup_{n\geq 2}E_n, \qquad Y=\left(\bigcup_{n\geq 2}F_n\right)\cup\{\star\}.
$$
    
and the hypothesis classes 

$$H=\left\{h\in\{0,1\}^{X}:h|_{E_n}\neq\mathbf{1}_{E_n} \text{ for all }n\geq 2\right\}$$

\noindent and

$$K=\left\{k\in\{0,1\}^{Y}:k|_{F_n}\neq\mathbf{1}_{F_n} \text{ for all }n\geq 2\right\},$$

\noindent where $\mathbf{1}_A$ denotes the all-one partial assignment on $A$.

Fix $m\geq 1$, and choose a bijection $\pi_m:X\to Y$ that maps $E_n$ onto $F_n$ for every $2\leq n\leq m$. Such a bijection exists because, after these finitely many blocks have been matched, the two remaining sets are countably infinite. For any length-$m$ labeled sequence $S$,

$$ S\text{ is $H$-realizable} \quad\Longleftrightarrow\quad \pi_m(S)\text{ is $K$-realizable}.$$

Indeed, aside from inconsistent repeated labels, a length-$m$ sequence is not $H$-realizable precisely when it contains the all-one labeling of $E_n$ for some $2\leq n\leq m$. The corresponding statement holds for $K$ with $F_n$ in place of $E_n$, and $\pi_m$ carries these obstructions bijectively to one another. Moreover, applying $\pi_m$ entrywise induces a graph isomorphism

$$G_m(H)\cong G_m(K).$$
 
On the other hand, suppose that a single signed relabeling $(\pi,\epsilon)$ preserved realizability at every finite level. The inclusion-minimal non-$H$-realizable finite partial assignments are precisely

$$\{\mathbf{1}_{E_n}:n\geq 2\},$$

\noindent while the inclusion-minimal non-$K$-realizable finite partial assignments are precisely

$$\{\mathbf{1}_{F_n}:n\geq 2\}.$$

A signed relabeling preserves inclusion-minimality and support size. Because there is exactly one such obstruction of each size $n\geq 2$, it follows that

$$\pi(E_n)=F_n\qquad\text{for every }n\geq 2.$$
    
Consequently,

$$\pi(X)=\bigcup_{n\geq 2}F_n=Y\setminus\{\star\},$$
    
\noindent contradicting the assumption that $\pi:X\to Y$ is a bijection. Thus the levelwise signed relabelings cannot, in general, be chosen consistently. 


It is therefore interesting to ask which learning-theoretic properties whose definitions are not confined to realizable samples of a single fixed length are invariant under learning-equivalence, and whether, for such properties, there are natural graph-theoretic structures in the contradiction graphs that capture them exactly or approximately.

\section*{Acknowledgments}
This research was supported in part by NSF grant ECCS-2217023.
The authors used \texttt{ChatGPT 5.4/5.5 Pro} to help explore the structure of contradiction graphs and developed some of the proofs (in particular the duplicate-free characterization) with the help of \texttt{ChatGPT 5.6 Sol}. The authors verified the correctness and originality of all content.

\bibliographystyle{alpha}
\bibliography{main}

\appendix

\section{Subcube partitions, DNF tautologies, and biclique covers}\label{apx:misc}

We describe several equivalent ways to view the combinatorial structure of a clique in the contradiction graph. Namely, a clique naturally gives a packing of pairwise disjoint subcubes, a disjoint DNF, and a biclique cover of the complete graph. When the clique has the maximal size $2^m$, the packing covers the entire Boolean cube, and hence becomes a homogeneous subcube partition/disjoint DNF tautology.

These correspondences allow results about subcube partitions, DNF tautologies, bounded-occurrence SAT, and biclique covers to be applied directly to cliques in contradiction graphs. For instance, we observe sufficient conditions for a clique to organize perfectly into a shattered mistake tree. Moreover, the natural biclique cover of a clique in the contradiction graph recovers a key lemma of Alon et al.~\cite{AlonMoranScheflerYehudayoff2024} with a simple proof.

Conversely, we also observe that cliques in the contradiction graph lend a useful perspective towards solving problems in these other equivalent settings. For instance, we show that the natural contradiction (conflicting) structure of DNF tautologies improves a lower bound for variable occurrences of Sloan et al.~\cite{SSR08}.

\subsection{Cliques and subcube partitions}\label{apx:subcube-partitions}

Let $H\subseteq\bits^X$, let $m\geq 1$, and let
\[
C\subseteq V(G_m(H))
\]
be a finite family of realizable samples. Since every $S\in C$ is realizable, it induces a well-defined partial assignment
\[
p_S:\supp(S)\longrightarrow\bits,
\qquad
p_S(x)=b\quad\Longleftrightarrow\quad (x,b)\in S.
\]

Let
\[
Y_C:=\bigcup_{S\in C}\supp(S).
\]
The sample $S$ determines the boolean subcube of $\bits^{Y_C}$,
\[
Q_S
:=
\left\{
a\in\bits^{Y_C}:
a(x)=p_S(x)\text{ for every }x\in\supp(S)
\right\}.
\]
Thus $Q_S$ has codimension $|\supp(S)|$ in $\bits^{Y_C}$ and relative volume
\[
\mu(Q_S)=2^{-|\supp(S)|}.
\]

\begin{proposition}\label{prop:cliques-subcube-packings}
For distinct $S,T\in C$,
\[
S\sim T\text{ in }G_m(H)
\qquad\Longleftrightarrow\qquad
Q_S\cap Q_T=\emptyset.
\]
Consequently, $C$ is a clique if and only if
\[
\mathcal Q(C):=\{Q_S\}_{S \in C}
\]
is a family of pairwise disjoint subcubes of $\bits^{Y_C}$.
\end{proposition}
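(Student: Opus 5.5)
The proof amounts to unwinding the definitions of adjacency and of the subcubes $Q_S$, and I would prove the two directions of the equivalence separately. Because the final statement about cliques follows immediately from the pairwise statement, the main work is the pairwise equivalence for distinct $S,T\in C$.

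For the forward direction, suppose $S\sim T$. By the definition of the contradiction graph, there are $x\in X$ and $b\in\bits$ with $(x,b)\in S$ and $(x,1-b)\in T$. Then $x\in\supp(S)\cap\supp(T)\subseteq Y_C$, and by well-definedness of the partial assignments $p_S(x)=b$ and $p_T(x)=1-b$. Every $a\in Q_S$ satisfies $a(x)=b$, while every $a\in Q_T$ satisfies $a(x)=1-b$. Hence $Q_S\cap Q_T=\emptyset$.

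For the reverse direction, I would argue by contrapositive. Suppose $S$ and $T$ are not adjacent, meaning that no point receives opposite labels in $S$ and $T$. Then $p_S$ and $p_T$ agree on $\supp(S)\cap\supp(T)$. Their union $p_S\cup p_T$ is therefore a well-defined partial assignment on $\supp(S)\cup\supp(T)\subseteq Y_C$, and we extend it arbitrarily, say by $0$, to all of $Y_C$. The resulting $a\in\bits^{Y_C}$ agrees with $p_S$ on $\supp(S)$ and with $p_T$ on $\supp(T)$, so $a\in Q_S\cap Q_T$. Realizability of $S$ and $T$ is used only so that $p_S$ and $p_T$ are well defined. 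Distinctness of $S$ and $T$ is needed only so that the adjacency relation is meaningful, since non-adjacency of a vertex with itself is automatic.

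The consequence then follows directly. $C$ is a clique exactly when every pair of distinct $S,T\in C$ is adjacent, which by the equivalence holds exactly when $Q_S\cap Q_T=\emptyset$ for all such pairs. One subtlety is that distinct samples may induce the same subcube, for example reorderings of the same labeled set. Such samples are non-adjacent, so they can never both lie in a clique, and the family $\mathcal Q(C)$, read as indexed by $S\in C$, is consistent with the equivalence. I expect no real obstacle in this proof; the only care required is in the reverse direction, checking that the glued assignment is well defined, which is exactly the absence of a contradiction.
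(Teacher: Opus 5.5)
Your proposal is correct and follows the same route as the paper: both reduce the equivalence to the observation that $Q_S\cap Q_T\neq\emptyset$ exactly when $p_S$ and $p_T$ agree on $\supp(S)\cap\supp(T)$, which is exactly non-adjacency. The paper compresses this into a single compatibility statement, whereas you spell out each direction (including the explicit glued and extended assignment) and note why indexing $\mathcal Q(C)$ by $S\in C$ is harmless.
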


\begin{proof}
The subcubes $Q_S$ and $Q_T$ intersect if and only if the partial assignments $p_S$ and $p_T$ are compatible. They are incompatible precisely when there is some $x\in\supp(S)\cap\supp(T)$ for which
\[
p_S(x)\neq p_T(x).
\]
This is exactly the condition that $S$ and $T$ contradict at $x$, and hence are adjacent in $G_m(H)$.
\end{proof}

Thus an arbitrary clique corresponds to a \emph{subcube packing}. This packing is strengthened to a partition when the clique has maximal size.

\begin{definition}
A clique $C\subseteq V(G_m(H))$ is \emph{extremal} if
\(
|C|=2^m.
\)
\end{definition}

\begin{proposition}\label{prop:extremal-homogeneous-partition}
Let $C\subseteq V(G_m(H))$ be a clique. Then
\[
\sum_{S\in C}2^{-|\supp(S)|}\leq 1.
\]
In particular, $|C|\leq 2^m$. If $|C|=2^m$, then:
\begin{enumerate}[label=(\roman*)]
\item $|\supp(S)|=m$ for every $S\in C$, that is, no sample in $C$ contains a repeated point;
\item the subcubes $(Q_S)_{S\in C}$ partition $\bits^{Y_C}$;
\item this partition is homogeneous of codimension $m$.
\end{enumerate}
\end{proposition}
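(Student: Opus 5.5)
The plan is a volume (Kraft-type) argument built on Proposition~\ref{prop:cliques-subcube-packings}. Let $Y_C$ be as above. If $Y_C$ is infinite, I will replace it by a finite set, which is legitimate when $C$ is finite: the union of finitely many finite supports is finite. If $C$ were infinite, I would apply the argument to every finite subfamily, since the inequality for all finite partial sums gives it for the full sum. So assume $C$ and $Y_C$ are finite, and put the uniform probability measure $\mu$ on $\bits^{Y_C}$.

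Each $Q_S$ is nonempty because $p_S$ is a well-defined partial assignment, which uses realizability of $S$. Moreover $\mu(Q_S)=2^{-|\supp(S)|}$. Since $C$ is a clique, Proposition~\ref{prop:cliques-subcube-packings} says the $Q_S$ are pairwise disjoint. Therefore
\[
\sum_{S\in C}2^{-|\supp(S)|}=\mu\Bigl(\bigcup_{S\in C}Q_S\Bigr)\le 1 .
\]
Every $S$ has length $m$, so $|\supp(S)|\le m$ and each term is at least $2^{-m}$. This gives $|C|\,2^{-m}\le 1$, that is, $|C|\le 2^m$; this also shows $C$ is finite, so the reduction above is harmless.

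For the equality case $|C|=2^m$, the chain
\[
1\;\ge\;\sum_{S\in C}2^{-|\supp(S)|}\;\ge\;|C|\,2^{-m}\;=\;1
\]
forces equality throughout. Equality in the second inequality means $|\supp(S)|=m$ for every $S$, so no point is repeated; this is (i). Equality in the first means the disjoint union of the $Q_S$ has full measure. Since the space is finite and every point has positive mass, the union is all of $\bits^{Y_C}$, giving the partition in (ii). Then (iii) is immediate from (i): each $Q_S$ fixes exactly the $m$ coordinates of $\supp(S)$, so it has codimension $m$.

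There is no real obstacle here. The only care needed is the finiteness bookkeeping so that $\mu$ is a genuine probability measure, and the observation that equality of measure in a finite uniform space forces set equality.
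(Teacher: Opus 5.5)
Your proof is correct and is essentially the paper's argument. Like the paper, you get disjointness of the $Q_S$ from Proposition~\ref{prop:cliques-subcube-packings}, use the measure bound $\sum_{S\in C}2^{-|\supp(S)|}\le 1$ and the estimate $|\supp(S)|\le m$, and when $|C|=2^m$ force equality throughout, with finiteness of $Y_C$ turning full measure into a partition. Your extra finiteness bookkeeping is harmless but unnecessary, since the paper takes $C$ to be a finite family from the outset, and that already makes $Y_C$ finite.
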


\begin{proof}
By Proposition~\ref{prop:cliques-subcube-packings}, the subcubes $(Q_S)_{S\in C}$ are pairwise disjoint. Therefore
\[
\sum_{S\in C}2^{-|\supp(S)|}
=
\sum_{S\in C}\mu(Q_S)
=
\mu\left(\bigcup_{S\in C}Q_S\right)
\leq 1.
\]
Since $|\supp(S)|\leq m$ for every length-$m$ sample,
\[
1
\geq
\sum_{S\in C}2^{-|\supp(S)|}
\geq
|C|2^{-m}.
\]
It follows that $|C|\leq 2^m$.

Now suppose $|C|=2^m$. Equality must hold throughout the preceding display. In particular,
\[
2^{-|\supp(S)|}=2^{-m}
\]
for every $S\in C$, and hence $|\supp(S)|=m$. Moreover,
\[
\mu\left(\bigcup_{S\in C}Q_S\right)=1.
\]
Because $Y_C$ is finite, every point of $\bits^{Y_C}$ has positive measure. Thus the union is all of $\bits^{Y_C}$, and the $Q_S$ form a homogeneous codimension-$m$ subcube partition.
\end{proof}

The converse construction is equally direct. Thus homogeneous codimension-$k$ subcube partitions and extremal cliques in order-$k$ contradiction graphs are equivalent, up to the choice of a realizing concept class.

Consequently, the literature on homogeneous and irreducible subcube partitions can be viewed as studying structured extremal cliques in contradiction graphs~\cite{FHK+22,SSR08, FKW02}. Reducibility of a subcube partition also has a natural clique interpretation: a proper subfamily of vertices is reducible precisely when the union of its associated cells is itself a subcube, so that the subfamily can be replaced by a single shorter partial assignment.

\subsection{DNF tautologies}\label{apx:dnf-cnf}

Introduce one Boolean variable $z_x$ for every $x\in Y_C$, and write
\[
z_x^{\,b}
:=
\begin{cases}
z_x,&b=1,\\
\neg z_x,&b=0.
\end{cases}
\]
The partial assignment associated with $S$ determines the term
\[
T_S
:=
\bigwedge_{x\in\supp(S)}z_x^{\,p_S(x)}.
\]
The satisfying assignments of $T_S$ are exactly the points of the subcube $Q_S$.

Two terms are said to \emph{conflict} if one contains $z_x$ and the other contains $\neg z_x$ for some variable $z_x$. Equivalently, their satisfying subcubes are disjoint. A DNF whose terms conflict pairwise is called a \emph{disjoint DNF}.

\begin{proposition}\label{prop:partition-dnf}
Let $\mathcal F=\{p_i:D_i\to\bits:i\in I\}$ be a finite family of partial assignments, and define
\[
\Phi_{\mathcal F}
:=
\bigvee_{i\in I}
\left(
\bigwedge_{x\in D_i}z_x^{\,p_i(x)}
\right).
\]
Then:
\begin{enumerate}[label=(\roman*)]
\item the corresponding subcubes are pairwise disjoint if and only if the terms of $\Phi_{\mathcal F}$ conflict pairwise;
\item the subcubes cover the Boolean cube if and only if $\Phi_{\mathcal F}$ is a tautology;
\item $\mathcal F$ is a subcube partition if and only if $\Phi_{\mathcal F}$ is a disjoint DNF tautology;
\item $\mathcal F$ is a homogeneous codimension-$k$ subcube partition if and only if $\Phi_{\mathcal F}$ is a disjoint $k$-DNF tautology (i.e. each term consists of exactly $k$ literals)
\end{enumerate}
\end{proposition}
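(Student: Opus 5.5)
Each of the four items is a direct translation through the identification of the term $\bigwedge_{x\in D_i}z_x^{\,p_i(x)}$ with the subcube $Q_i=\{a\in\bits^{Y}:a|_{D_i}=p_i\}$, where $Y=\bigcup_i D_i$ is the set of relevant variables. So I will first record the basic fact that the satisfying assignments of a term are exactly the points of its subcube. An assignment $a$ satisfies the conjunction if and only if every literal $z_x^{\,p_i(x)}$ evaluates to true. That happens precisely when $a(x)=p_i(x)$ for all $x\in D_i$, since $z_x^{\,1}=z_x$ and $z_x^{\,0}=\neg z_x$. This was already noted for the terms $T_S$ above, and the same check works for an arbitrary family $\mathcal F$.

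For (i), I will show that two terms conflict if and only if the partial assignments $p_i,p_j$ disagree at some $x\in D_i\cap D_j$. If they conflict, the subcubes are clearly disjoint, because no point can take both values at $x$. If they do not conflict, the union $p_i\cup p_j$ is a well-defined partial assignment on $D_i\cup D_j$. Extending it arbitrarily to $Y$ gives a common point of $Q_i$ and $Q_j$. This is the same argument as Proposition~\ref{prop:cliques-subcube-packings}, applied pairwise.

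For (ii), the satisfying set of $\Phi_{\mathcal F}$ is the union of the satisfying sets of its terms, which is $\bigcup_i Q_i$. A tautology is by definition satisfied by every point of $\bits^{Y}$, so (ii) is immediate. Then (iii) is the conjunction of (i) and (ii), since a subcube partition is exactly a pairwise disjoint covering family. For (iv), I will add one observation: the codimension of $Q_i$ in $\bits^Y$ is $|D_i|$, which equals the number of literals in the $i$-th term. So every cell has codimension $k$ if and only if every term has exactly $k$ literals. Combining this with (iii) gives (iv).

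I expect no real obstacle. The only points needing care are bookkeeping conventions: the cube must be taken over $Y=\bigcup_i D_i$, or over any fixed superset of it, so that "tautology" and "cover" refer to the same ambient space. Each $D_i$ must be a set, so that the term has exactly $|D_i|$ distinct literals and codimension matches term length. I will state these conventions at the outset and then check (i)–(iv) in order.
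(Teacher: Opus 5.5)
Your proposal is correct and matches the paper's reasoning. The paper gives no separate proof of this proposition: it treats (i)--(iv) as immediate from two facts stated just before it, namely that the satisfying assignments of a term are exactly the points of its subcube, and that conflicting terms correspond to disjoint subcubes (the argument of Proposition~\ref{prop:cliques-subcube-packings}), and these are the two facts you verify.
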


In particular, an arbitrary clique $C\subseteq V(G_m(H))$ gives the disjoint DNF
\[
\Phi_C:=\bigvee_{S\in C}T_S.
\]
That said, it need not be a tautology. If $C$ is extremal, however, Proposition~\ref{prop:extremal-homogeneous-partition} shows that
\[
\Phi_C
=
\bigvee_{S\in C}
\left(
\bigwedge_{x\in\supp(S)}z_x^{\,p_S(x)}
\right)
\]
is a disjoint $m$-DNF tautology with exactly $2^m$ terms.

This clique--DNF correspondence implies a somewhat surprising property of extremal cliques. The main theorem of Sloan, Sz\"or\'enyi, and Tur\'an~\cite{SSR08} states that a disjoint DNF tautology in which every two terms conflict in exactly one variable is generated by a nonrepeating decision tree. Sz\"or\'enyi~\cite{S08} strengthened this from conflict bound one to conflict bound two. That is, if every two terms conflict in one or two variables, then the tautology is still generated by a decision tree.

\begin{proposition}[cf.~\cite{SSR08, S08}]\label{prop:conflict-two-tree}
Let $C\subseteq V(G_m(H))$ be an extremal clique. If every two datasets of $C$ contradict on at most $2$ examples, $C$ is exactly the family of root-to-leaf samples of a complete binary mistake tree of depth $m$ shattered by $H$. In particular,
\[
\Ldim(H)\geq m.
\]
\end{proposition}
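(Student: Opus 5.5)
Proposition~\ref{prop:extremal-homogeneous-partition} lets us translate the clique into a DNF statement and then quote the structural theorem. Since $|C|=2^m$, every $S\in C$ has $|\supp(S)|=m$, and the subcubes $(Q_S)_{S\in C}$ form a homogeneous codimension-$m$ partition of $\bits^{Y_C}$. Proposition~\ref{prop:partition-dnf} then says that $\Phi_C$ is a disjoint $m$-DNF tautology with $2^m$ terms. The number of variables in which the terms $T_S$ and $T_{S'}$ conflict is exactly the number of points of $\supp(S)\cap\supp(S')$ on which $p_S$ and $p_{S'}$ disagree. The hypothesis says this is at most $2$, and the clique property (Proposition~\ref{prop:cliques-subcube-packings}) says it is at least $1$. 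So $\Phi_C$ is a disjoint DNF tautology with conflict bound two, and Sz\"or\'enyi's theorem~\cite{S08} shows that it is generated by a nonrepeating decision tree. Concretely, there is a binary decision tree over the variables $z_x$, with no variable repeated along any path, whose leaves are in bijection with the terms, and the term at each leaf is the conjunction of the literals along its path.

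Next we use homogeneity to show the tree is complete of depth $m$. Each term has exactly $m$ literals, and because the tree is nonrepeating, the literal set of a leaf's term is exactly the set of edges on its root-to-leaf path. Hence every leaf sits at depth exactly $m$. A binary tree whose leaves all lie at depth $m$ is the complete binary tree of depth $m$, and it has $2^m$ leaves, which matches $|C|$.

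Finally we read the tree as a mistake tree. Label each internal node by the point $x$ with $z_x$ queried there, and give its outgoing edges the labels $0$ and $1$. The labeled set along a root-to-leaf path is then $\{(x,p_S(x)):x\in\supp(S)\}$ for the corresponding $S\in C$. Since $S$ is $H$-realizable, every root-to-leaf sample is $H$-realizable, so the tree is shattered and $\Ldim(H)\ge m$ by Definition~\ref{def:littlestone-dimension}. One bookkeeping point remains: "root-to-leaf samples" are ordered sequences, while elements of $C$ are ordered samples with the same labeled supports. So the identification of $C$ with the paths holds up to ordering of entries, which is how the statement should be read.

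The main obstacle is confirming that the cited decision-tree theorem applies exactly as stated. Two things need checking: that "generated by a decision tree" in~\cite{SSR08,S08} means the leaves give precisely the terms, with no merging or extra leaves, and that the tree is nonrepeating. If the theorem only yields some decision tree, we would first argue nonrepetition directly. A variable repeated on a path would create a leaf term containing both $z_x$ and $\neg z_x$, which is impossible for a realizable $S$, or would produce a redundant query, which can be pruned. After that reduction, the depth argument above goes through.
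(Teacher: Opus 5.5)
Your proposal is correct and follows the same route as the paper. The paper likewise translates the extremal clique into a disjoint homogeneous $m$-DNF tautology via Propositions~\ref{prop:extremal-homogeneous-partition} and~\ref{prop:partition-dnf}, and then simply invokes Sz\"or\'enyi's conflict-bound-two theorem~\cite{S08}. The steps you spell out are left implicit there: all leaves sit at depth exactly $m$ by homogeneity and nonrepetition, the decision tree reads as a shattered mistake tree, and $C$ matches the root-to-leaf paths only up to the ordering of entries.
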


Conversely, this correspondence is also interesting as it lends a new perspective towards studying DNF tautologies. For example, let

\[
\Phi=T_1\vee\cdots\vee T_r
\]
be a disjoint DNF tautology over variables $z_1,\ldots,z_n$, all of which are \emph{active}, meaning that each occurs in at least one term. For each term, put
\[
w_i:=2^{-|T_i|}.
\]
Because the satisfying subcubes of the terms partition the Boolean cube,
\[
\sum_{i=1}^r w_i=1.
\]
For $j\in[n]$, define
\[
p_j^+
:=
\sum_{i:z_j\in T_i}w_i,
\qquad
p_j^-
:=
\sum_{i:\neg z_j\in T_i}w_i,
\qquad
v_j:=p_j^++p_j^-.
\]
Thus $v_j$ is the total volume of terms in which $z_j$ occurs in either polarity. Write
\[
\alpha(\Phi):=\max_{j\in[n]}v_j
\]
and define
\[
\alpha_n
:=
\inf\left\{
\alpha(\Phi):
\Phi\text{ is a disjoint DNF tautology on exactly $n$ active variables}
\right\}.
\]

It is known that (cf. \cite{SSR08, SS00}),

$$\frac{\log{n} - \log{\log{n}}}{n} \leq \alpha_n \leq O(n^{-1/5})$$

However, by exploiting the conflicting structure of DNF tautologies, we can easily improve the lower bound to $\alpha_n \geq \Omega(n^{-1/2})$, as we prove next.

\begin{lemma}\label{lem:dnf-volume-balance}
For every $j\in[n]$,
\[
p_j^+=p_j^-=\frac{v_j}{2}.
\]
\end{lemma}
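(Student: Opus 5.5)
The plan is to compute $p_j^+$ and $p_j^-$ as probabilities under the uniform measure on $\bits^n$, using the fact that the satisfying subcubes of the terms partition the cube. Let $Q_i\subseteq\bits^n$ denote the satisfying subcube of $T_i$, so that $\mu(Q_i)=w_i$ and, since $\Phi$ is a disjoint tautology, the $Q_i$ partition $\bits^n$. Fix $j\in[n]$ and split the terms into three groups: those containing $z_j$, those containing $\neg z_j$, and those not mentioning $z_j$.

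First compute the measure of the half-cube $\{a:a_j=1\}$ in two ways. It equals $1/2$. Because the $Q_i$ partition the cube, it also equals $\sum_i\mu(Q_i\cap\{a_j=1\})$. For a term containing $z_j$ this intersection is all of $Q_i$ and contributes $w_i$. For a term containing $\neg z_j$ it is empty. For a term not mentioning $z_j$ it is exactly half of $Q_i$ and contributes $w_i/2$. Writing $u_j$ for the total weight of the terms not mentioning $z_j$, this gives
\[
\tfrac12 = p_j^+ + \tfrac12 u_j .
\]
The same computation on the half-cube $\{a_j=0\}$ gives $\tfrac12 = p_j^- + \tfrac12 u_j$.

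Subtracting the two equations yields $p_j^+=p_j^-$, and hence each equals $v_j/2$ because $v_j=p_j^++p_j^-$ by definition. As a byproduct we also get $v_j=1-u_j$, which is consistent with $\sum_i w_i=1$.

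I do not expect a real obstacle. The only point needing care is that the $Q_i$ are pairwise disjoint and cover the cube, so that the measure of each half-cube decomposes exactly as the sum above. This holds by the definition of a disjoint DNF tautology and Proposition~\ref{prop:partition-dnf}. The step that a term not mentioning $z_j$ meets each half-cube in exactly half its measure is immediate from the product structure of subcubes.
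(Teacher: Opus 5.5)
Your proposal is correct and follows essentially the same route as the paper: both compute the measure of each half-cube $\{z_j=b\}$ by splitting the terms according to whether they contain $z_j$, contain $\neg z_j$, or omit it. The only cosmetic difference is that the paper substitutes $u_j=1-v_j$ directly to get $p_j^+=v_j/2$, whereas you subtract the two half-cube equations to get $p_j^+=p_j^-$ first.
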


\begin{proof}
The halfcube $z_j=1$ has measure $1/2$. Terms containing $z_j$ contribute their entire volume to this halfcube, terms containing $\neg z_j$ contribute none of their volume, and terms omitting $z_j$ contribute half of their volume. Therefore
\[
p_j^+ + \frac12(1-v_j)=\frac12,
\]
which gives $p_j^+=v_j/2$. Applying the same argument to the halfcube $z_j=0$ gives $p_j^-=v_j/2$.
\end{proof}

\begin{theorem}\label{thm:alpha-lower-bound}
For every $n\geq1$,

$$\alpha_n \geq \frac{\sqrt{2n+1}-1}{n}.$$
\end{theorem}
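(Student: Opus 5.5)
The plan is to combine the volume balance of Lemma~\ref{lem:dnf-volume-balance} with a double count of conflicting pairs of terms. The terms pairwise conflict, which is the clique property of Proposition~\ref{prop:cliques-subcube-packings}. The target bound is exactly the positive root of $n a^2+2a-2=0$, so it suffices to prove
\[
\frac{n\,\alpha(\Phi)^2}{2}+\alpha(\Phi)\ \ge\ 1
\]
for every disjoint DNF tautology $\Phi$ on $n$ active variables, and then take the infimum.

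\emph{Step 1 (counting conflicts).} Since the terms of $\Phi$ pairwise conflict, every unordered pair $\{i,k\}$ with $i\neq k$ conflicts in at least one variable $z_j$. Fix a variable $j$. Expanding $p_j^+p_j^-=\sum w_iw_k$, where the sum runs over $i$ with $z_j\in T_i$ and $k$ with $\neg z_j\in T_k$, gives precisely the total weight of pairs conflicting at $j$. Summing over $j$ therefore counts each pair at least once, so
\[
\sum_{\{i,k\}} w_iw_k\ \le\ \sum_{j=1}^n p_j^+p_j^-.
\]
The left side equals $\tfrac12\bigl((\sum_i w_i)^2-\sum_i w_i^2\bigr)=\tfrac12\bigl(1-\sum_i w_i^2\bigr)$, using $\sum_i w_i=1$. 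By Lemma~\ref{lem:dnf-volume-balance}, $p_j^+p_j^-=v_j^2/4\le \alpha(\Phi)^2/4$. Combining these,
\[
1-\sum_i w_i^2\ \le\ \frac{n\,\alpha(\Phi)^2}{2}.
\]

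\emph{Step 2 (bounding $\sum_i w_i^2$).} We have $\sum_i w_i^2\le \max_i w_i\cdot\sum_i w_i=\max_i w_i$. Since $n\ge1$, no term is empty: an empty term would be the whole cube, so disjointness would force it to be the only term and no variable would be active. Hence each $T_i$ contains some variable $z_j$, and $w_i\le v_j\le\alpha(\Phi)$. Substituting into Step 1 gives $1-\alpha(\Phi)\le n\alpha(\Phi)^2/2$, which is the desired quadratic inequality.

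\emph{Step 3 (solving).} The function $n a^2/2+a$ is increasing for $a\ge0$. So the inequality forces $\alpha(\Phi)\ge(\sqrt{2n+1}-1)/n$, and taking the infimum over $\Phi$ yields the theorem.

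The only delicate point is Step 1. One must check that $\sum_j p_j^+p_j^-$ genuinely dominates the total pair weight. It does, because pairs conflicting in several variables are overcounted rather than undercounted. The nonemptiness remark in Step 2 is a minor edge case.
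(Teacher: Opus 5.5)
Your proposal is correct and follows the paper's proof essentially step for step. It uses the conflict double count together with the volume-balance lemma to obtain $1-\sum_i w_i^2\le n\alpha^2/2$, bounds $\sum_i w_i^2\le\alpha$ via nonempty terms, and then solves the quadratic $n\alpha^2+2\alpha-2\ge 0$. The only difference is cosmetic: you count unordered pairs against $\sum_j p_j^+p_j^-$, whereas the paper counts ordered pairs against $\sum_j 2p_j^+p_j^-$.
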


\begin{proof}
Fix a disjoint DNF tautology $\Phi=T_1\vee\cdots\vee T_r$ on $n$ active variables, and write $\alpha=\alpha(\Phi)$. Since the terms are disjoint, every ordered pair of distinct terms conflicts in at least one variable. Consequently,

\begin{align*}
1-\sum_{i=1}^r w_i^2
&=
\sum_{i\neq k}w_iw_k\\
&\leq
\sum_{j=1}^n
\left(p_j^+p_j^-+p_j^-p_j^+\right)\\
&=
\frac12\sum_{j=1}^n v_j^2\\
&\leq
\frac{n\alpha^2}{2},
\end{align*}
where the equality in the third line uses Lemma~\ref{lem:dnf-volume-balance}.

Because $n\geq1$ and every variable is active, no term is empty. For each $i$, choose a literal of $T_i$, say on variable $z_j$. Then $w_i$ is one of the summands defining $v_j$, so $w_i\leq v_j\leq\alpha$. Hence

$$\sum_{i=1}^r w_i^2 \leq \alpha\sum_{i=1}^r w_i = \alpha.$$

Combining the two estimates yields

$$1-\alpha\leq\frac{n\alpha^2}{2},$$

or equivalently

$$n\alpha^2+2\alpha-2\geq0.$$

Since $\alpha\geq0$, solving this quadratic inequality gives

$$\alpha\geq\frac{\sqrt{2n+1}-1}{n}$$
\end{proof}

\subsection{A biclique cover of $C$}\label{apx:biclique-cover}

Let $C\subseteq V(G_m(H))$ be a clique, and put $q:=|C|$. For $x\in X$ and $b\in\bits$, let
\[
C_{x,b}
:=
\{S\in C:(x,b)\in S\}.
\]
The two sets $C_{x,0}$ and $C_{x,1}$ are disjoint. They determine a biclique
\[
B_x:=K_{C_{x,0},C_{x,1}}
\]
in the complete graph whose vertex set is $C$.

\begin{proposition}\label{prop:natural-biclique-cover}
The family
\[
\mathcal B_C
:=
(B_x:x\in X,\ C_{x,0}\neq\emptyset,\ C_{x,1}\neq\emptyset)
\]
is a biclique cover of $C$. Moreover,
\begin{enumerate}[label=(\roman*)]
\item the multiplicity with which an edge $\{S,T\}$ is covered is exactly
\[
\bigl|\{x\in\supp(S)\cap\supp(T):p_S(x)\neq p_T(x)\}\bigr|,
\]
the number of coordinates on which $S$ and $T$ contradict;
\item every edge is covered at least once and at most $m$ times;
\item every vertex $S\in C$ belongs to at most $m$ bicliques.
\end{enumerate}
Thus $\mathcal B_C$ is simultaneously a biclique cover of order at most $m$ and an $m$-local biclique cover.
\end{proposition}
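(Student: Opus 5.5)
We need to show that $\mathcal B_C$ covers every edge of the complete graph on $C$, and to verify the three counting claims. The plan is to reduce everything to the pointwise definition of $C_{x,b}$ and the realizability of each $S\in C$.

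\emph{Multiplicity (i).} Fix distinct $S,T\in C$. The edge $\{S,T\}$ lies in $B_x$ exactly when one of $S,T$ lies in $C_{x,0}$ and the other in $C_{x,1}$. Since both samples are realizable, each assigns at most one label to $x$. So this happens iff $x\in\supp(S)\cap\supp(T)$ and $p_S(x)\neq p_T(x)$, i.e.\ iff $S,T$ contradict at $x$. Any such $x$ automatically has $C_{x,0},C_{x,1}\neq\emptyset$, so $B_x$ is a member of the family $\mathcal B_C$. Counting such $x$ gives the stated multiplicity. The bicliques are indexed by distinct points $x$, so no double counting arises.

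\emph{Covering bounds (ii).} Because $C$ is a clique, $S$ and $T$ contradict at some point, so the multiplicity is at least $1$. This gives the lower bound, and it also shows that $\mathcal B_C$ is indeed a cover. For the upper bound, the contradicting points lie in $\supp(S)$, and $|\supp(S)|\le m$ since $S$ has length $m$.

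\emph{Vertex membership (iii).} If $S$ belongs to $B_x$, then $S\in C_{x,0}\cup C_{x,1}$, which forces $x\in\supp(S)$. Hence $S$ belongs to at most $|\supp(S)|\le m$ bicliques.

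The only point requiring care is in (i): we must check that $C_{x,0}\cap C_{x,1}=\emptyset$, which follows from realizability. This ensures each $B_x$ is a genuine biclique and that membership in $B_x$ corresponds exactly to contradiction at $x$. The final sentence of the proposition then restates (ii) and (iii) in the terminology of covers of order at most $m$ and $m$-local covers.
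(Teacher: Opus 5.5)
Your proof is correct and takes essentially the same approach as the paper. Both arguments identify membership of $\{S,T\}$ in $B_x$ with contradiction at $x$, obtain coverage from the clique property, and bound the edge multiplicity and vertex membership by $|\supp(S)|\le m$; the only difference is that you derive coverage from the multiplicity count instead of proving it first.
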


\begin{proof}
Let $S,T\in C$ be distinct. Since $C$ is a clique, there is some $x\in X$ such that, after possibly interchanging $S$ and $T$,
\[
(x,0)\in S,
\qquad
(x,1)\in T.
\]
Hence $S\in C_{x,0}$ and $T\in C_{x,1}$, so the edge $\{S,T\}$ belongs to $B_x$. Therefore the bicliques cover every edge of $C$.

The edge $\{S,T\}$ belongs to $B_x$ precisely when the two samples assign opposite labels to $x$, proving the multiplicity statement. There are at most $m$ such coordinates because each sample has support of size at most $m$. Finally, $S$ belongs to $B_x$ only if $x\in\supp(S)$, so $S$ belongs to at most $|\supp(S)|\leq m$ bicliques.
\end{proof}

This biclique cover of $C$ provides a simple proof for a key lemma from \cite{AlonMoranScheflerYehudayoff2024} that every clique contains an example appearing in $\Omega(|C|/m)$ datasets. They observe that iterating this lemma constructs a deep, shattered tree whose root-to-leaf datasets are sub-datasets of vertices in $C$.

\begin{proposition}[cf.\ \cite{AlonMoranScheflerYehudayoff2024}]\label{prop:biclique-splitter}
Let $C\subseteq V(G_m(H))$ be a clique of size $q\geq2$. Then some $x\in X$ satisfies
\[
|C_{x,0}|,\ |C_{x,1}|\geq\frac{q-1}{2m}.
\]
\end{proposition}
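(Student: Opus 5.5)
We prove Proposition~\ref{prop:biclique-splitter} by a double-counting argument on the natural biclique cover $\mathcal B_C$ of Proposition~\ref{prop:natural-biclique-cover}. The complete graph on $C$ has $\binom{q}{2}$ edges, and by part~(ii) of that proposition each edge lies in at least one biclique $B_x$. Hence
\[
\binom{q}{2}\le\sum_{x}|C_{x,0}|\,|C_{x,1}|,
\]
where the sum runs over points $x$ with both $C_{x,0},C_{x,1}$ nonempty.

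To bound the right-hand side, we weight each biclique by its smaller side. Write $s_x=\min(|C_{x,0}|,|C_{x,1}|)$, so that
\[
|C_{x,0}|\,|C_{x,1}|\le s_x\bigl(|C_{x,0}|+|C_{x,1}|\bigr).
\]
Suppose, for contradiction, that $s_x<\frac{q-1}{2m}$ for every $x$. Then the sum is strictly less than
\[
\frac{q-1}{2m}\sum_x\bigl(|C_{x,0}|+|C_{x,1}|\bigr).
\]
Here $\sum_x(|C_{x,0}|+|C_{x,1}|)$ counts pairs $(S,x)$ with $S\in C$ and $x\in\supp(S)$ such that $B_x$ is in the cover. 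By part~(iii), each $S$ lies in at most $m$ bicliques, so this count is at most $qm$. Therefore
\[
\sum_x|C_{x,0}|\,|C_{x,1}|<\frac{q-1}{2m}\cdot qm=\frac{q(q-1)}{2}=\binom q2,
\]
which contradicts the covering inequality. So some $x$ has $s_x\ge\frac{q-1}{2m}$, and that $x$ is the required point.

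The sum is finite, since only points in $\bigcup_{S\in C}\supp(S)$ contribute and $C$ is finite. The strict inequality survives summation provided at least one term is present, which holds because $q\ge2$ forces at least one edge and hence at least one biclique.

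The main obstacle is choosing the weighting $s_x(|C_{x,0}|+|C_{x,1}|)$ so that the local bound from part~(iii) applies directly. A naive bound of $|C_{x,0}||C_{x,1}|$ by $q\cdot s_x$ combined with averaging over the $x$'s loses a factor tied to the number of bicliques, which is not controlled by $m$. Routing the count through vertex incidences is what yields exactly the constant $2m$.
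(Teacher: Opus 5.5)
Your proof is correct and is essentially the paper's argument: the same double count $\binom{q}{2}\le\sum_x |C_{x,0}||C_{x,1}|\le \tau\sum_x(|C_{x,0}|+|C_{x,1}|)\le \tau qm$ through the natural biclique cover, with the incidence count bounded by support sizes. The only difference is that you argue by contradiction, which is why you need the extra remark that the strict inequality survives summation; the paper sets $\tau=\max_x\min\{|C_{x,0}|,|C_{x,1}|\}$ and divides by $qm$ directly, so that remark is not needed there.
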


\begin{proof}
Put $a_x=|C_{x,0}|$, $b_x=|C_{x,1}|$, and $\tau=\max_x\min\{a_x,b_x\}$. Since $\mathcal B_C$ covers $C$ and each sample has support size at most $m$,
\[
\underbrace{\binom q2}_{|E(C)|}\leq\overbrace{\sum_x a_xb_x}^{|E(\mathcal{B}_C)|}
\leq\tau\sum_x(a_x+b_x)
=\tau\sum_{S\in C}|\supp(S)|
\leq\tau qm.
\]

Hence, dividing by $qm$, we obtain $\tau\geq(q-1)/(2m)$.
\end{proof}

\section{Omitted proofs}\label{apx:proofs}

\subsubsection*{Proof of Lemma \ref{lem:minimal-one-moving}} 
\begin{proof}
Suppose first that $S$ contains a unique interesting point, and let $e=(x,b)$ be the corresponding signed example. Every other point appearing in $S$ is boring, and hence
\[
\Nbr(S)=\Nbr(e^m).
\]
Assume that some vertex $T$ satisfies
\[
\emptyset\subsetneq\Nbr(T)\subsetneq\Nbr(S).
\]
Since $\Nbr(T)\neq\emptyset$, the sample $T$ contains an interesting signed example $f=(z,c)$. The repeated sample $\bar f^m$ is realizable and adjacent to $T$, so it is adjacent to $S$. Thus $S$ contains $f$, and uniqueness gives $f=e$. Since $T$ contains $e$, every vertex adjacent to $e^m$ is adjacent to $T$. Therefore
\[
\Nbr(S)=\Nbr(e^m)\subseteq\Nbr(T),
\]
a contradiction. Hence $S$ is minimal.

Conversely, let $S$ be minimal. Since $\Nbr(S)\neq\emptyset$, the sample $S$ contains at least one interesting point. If it contained two distinct interesting signed examples $e=(x,b)$ and $f=(z,c)$, then
\[
\emptyset\subsetneq\Nbr(e^m)\subseteq\Nbr(S).
\]
The last containment is indeed proper because $\bar f^m$ is adjacent to $S$ but not to $e^m$. This contradicts minimality. Thus $S$ contains exactly one interesting point.
\end{proof}

\subsubsection*{Proof of Theorem \ref{thm:df-abstract-tree}} 
Suppose first that $H$ shatters a mistake tree of depth $d$. Index its internal nodes by $s\in\bits^{<d}$, and let $x_s$ be the point labeling node $s$.  Set
\[
\mathcal V_s^b=\mathcal V_{(x_s,b)}.
\]
Both labels of $x_s$ occur on realizable branches, so $x_s$ is interesting. The characterization of the ordinary example sets above shows that $(\mathcal V_s^0,\mathcal V_s^1)$ is an ordered complementary pair.

No point can appear twice on a root-to-leaf path in a shattered mistake tree. Indeed, if $x$ appeared at an ancestor and again at a descendant, one of the two branches at the descendant would assign $x$ the opposite of its earlier label, producing an unrealizable path. Hence every root-to-leaf path gives a duplicate-free labeled sample of size $d$. Let $q_\eta$ be the sample on the path indexed by $\eta$.  It is a vertex of $G_d(H)$, and whenever $sb\preceq\eta$, it contains $(x_s,b)$.  Therefore
\[
q_\eta\in\mathcal V_s^b,
\]
so these objects form an abstract mistake tree.

Conversely, suppose $G_d(H)$ contains an abstract mistake tree. By the characterization of the ordinary example sets above, for each internal node $s$ there is an interesting point $x_s$ and opposite bits $c_s^0,c_s^1$ such that
\[
\mathcal V_s^b=\mathcal V_{(x_s,c_s^b)}.
\]
Label node $s$ by $x_s$, and label the edge from $s$ to $sb$ by $c_s^b$.

Fix $\eta\in\bits^d$.  If $sb\preceq\eta$, then
\[
q_\eta\in\mathcal V_s^b
=
\mathcal V_{(x_s,c_s^b)}.
\]
Thus $q_\eta$ contains every signed example prescribed along the root-to-leaf path indexed by $\eta$. Since $q_\eta$ is $H$-realizable, the entire path is $H$-realizable. This holds for every leaf, so the constructed mistake tree is shattered by $H$.
\qed

\end{document}